\newtheorem{remark}{Remark}[section]
\newtheorem{mydef}{Definition}[section]
\newtheorem{teo}{Theorem}[section]
\newtheorem{lemma}[teo]{Lemma}
\newtheorem{prop}[teo]{Proposition}
\newcommand*\pFq[6][8]{%
  \begingroup 
  \pFqmuskip=#1mu\relax
  \mathcode`\,=\string"8000
  \begingroup\lccode`\~=`\,
  \lowercase{\endgroup\let~}\pFqcomma
  {}_{#2}F_{#3}{\left[\genfrac..{0pt}{}{#4}{#5};#6\right]}%
  \endgroup
}
\newcommand{\pFqcomma}{\mskip\pFqmuskip}
\def\S{\mathbb{S}}
\def\R{\mathbb{R}}
\def\Z{\mathbb{Z}}
\def\spacingset#1{\renewcommand{\baselinestretch}%
{#1}\small\normalsize} \spacingset{1}
\begin{document}

\singlespace

\baselineskip=28pt \vskip 2cm
\begin{center} {\LARGE{\sc Sobolev Spaces, Kernels and Discrepancies over Hyperspheres}}
\end{center}

\vspace{2cm}
\begin{center}\large
Simon Hubbert,\footnote{ \baselineskip=10pt
Department of Economics, Mathematics and Statistics, Birkbeck, University of London UK 
}
Emilio Porcu,\footnote{ \baselineskip=10pt
Department of Mathematics at Khalifa University, Abu Dhabi 
}
Chris. J. Oates \footnote{ \baselineskip=10pt
School of Mathematics, Statistics \& Physics, Newcastle University, UK;
Alan Turing Institute, UK}
and Mark Girolami \footnote{ \baselineskip=10pt
Department of Engineering, University of Cambridge, UK;
Alan Turing Institute, UK}
\end{center}

\vspace{2cm}

\begin{abstract}
This work provides theoretical foundations for kernel methods in the hyperspherical context. 
Specifically, we characterise the native spaces (reproducing kernel Hilbert spaces) and the Sobolev spaces associated with kernels defined over hyperspheres. 
Our results have direct consequences for kernel cubature, determining the rate of convergence of the worst case error, and expanding the applicability of cubature algorithms based on Stein's method. 
We first introduce a suitable characterisation on Sobolev spaces on the $d$-dimensional hypersphere embedded in $(d+1)$-dimensional Euclidean spaces. Our characterisation is based on the Fourier--Schoenberg sequences associated with a given kernel. Such sequences are hard (if not impossible) to compute analytically on $d$-dimensional spheres, but often feasible over Hilbert spheres. We circumvent this problem by finding a projection operator that allows to Fourier mapping from Hilbert into finite dimensional hyperspheres. We illustrate our findings through some parametric families of kernels.  \\

\vspace{0.3cm}
\noindent 

{\em Keywords: Discrepancies; Kernel Cubature; Native Spaces; Sobolev Spaces.
}
\end{abstract}





\newpage
\singlespace
\section{Introduction}
\label{sec:intro}



The aim of this paper is to precisely characterise the Hilbert spaces reproduced by certain kernels defined over the hypersphere $\mathbb{S}^d$, the $d$-dimensional sphere embedded in $\mathbb{R}^{d+1}$. 
A kernel $K : \mathbb{S}^d \times \mathbb{S}^d \rightarrow \mathbb{R}$ is said to be \emph{geodesically isotropic} if it depends only on the geodesic distance (denoted $\theta$ throughout) between any pair of points located over the spherical shell. 
Hence, under geodesic isotropy, there exists a mapping $\psi: [0,\pi] \to \mathbb{R}$ such that, for all $\boldsymbol{\xi},\boldsymbol{\eta} \in \mathbb{S}^d$, $K(\boldsymbol{\xi},\boldsymbol{\eta})= \psi(\theta(\boldsymbol{\xi},\boldsymbol{\eta}))$.
Is there a suitable spectral characterisation of Sobolev-type kernels, $K$, under the assumption of geodesic isotropy? 
The literature is elusive on this aspect. 
Several contributions refer to isotropic kernels in $\mathbb{R}^{d+1}$ and claim that the restriction of those kernels to $\mathbb{S}^d$ retains the same Sobolev properties. 
The statement is correct, albeit quite unnatural: constructive criticism in \cite{gneiting2013} and \cite{PAF2016} show that direct constructions on the sphere would be preferable. 
To do so, it becomes necessary to properly define Sobolev spaces for geodesically isotropic kernels over spheres.

\subsection{Motivation:  Kernels, Cubature and Discrepancies}

The applications of reproducing kernels are myriad, but the principal motivation for this work is to provide theoretical foundations for the related computational tools of \emph{kernel cubature} and \emph{kernel discrepancy}. 
We illustrate these concepts for the general case of kernels $K : \mathcal{X} \times \mathcal{X} \rightarrow \mathbb{R}$ on a measurable space $\mathcal{X}$, with no assumptions on their geometry. 
We denote $\mathcal{N}_{K}$ the reproducing kernel Hilbert space associated with $K$.
Let $\delta(x)$ denote a unit mass at $x \in \mathcal{X}$.
Given a probability distribution $P$ on $\mathcal{X}$ and a set of locations $\{x_1,\dots,x_n\} \subset \mathcal{X}$, kernel cubature associates to each location $x_i$ a scalar weight $w_i$, such that the kernel discrepancy
\begin{align}
D_k\left( \sum_{i=1}^n w_i \delta(x_i) , P \right) = \sup_{\|f\|_{\mathcal{N}_K} \leq 1} \left| \sum_{i=1}^n w_i f(x_i) - \int f \; \mathrm{d}P \right| \label{eq: kernel discrepancy}
\end{align}
is minimised.
These topics have received considerable recent interest in statistics, machine learning, and numerical analysis, where kernel cubature has been applied to such tasks as sampling \citep{teymur2021optimal}, experimental design \citep{pronzato2020bayesian}, model selection \citep{briol2019probabilistic}, and numerical integration \citep{jagadeeswaran2019fast}.
The increasing popularity of kernel cubature is due in part to a closed-form expressions for the cubature weights $w_i$ and to the fact that their error decay is rate-optimal among all cubature methods for $\mathcal{N}_{K}$, even when the locations $x_i$ are randomly sampled.
Further, these tools have received attention in connection with Stein's method from applied probability \citep{Stein1972}, where \citet{chwialkowski2016kernel,liu2016kernelized} introduced a kernel discrepancy that can be computed even when $P$ is implicitly defined up to a normalisation constant, addressing a problem that is routinely encountered in the Bayesian context.

Specialising the discussion to $\mathcal{X} = \mathbb{S}^d$, kernel cubature appears in rendering algorithms for glossy surfaces \citep{marques2013spherical,marques2022gaussian} and as a criterion by which the performance of rendering algorithms is measured \citep{marques2019spectral}, while kernel cubature has been used in combination with Stein's method to numerically approximate posterior expectations in directional statistics \citep{barp2022riemann}.
The theory of kernel cubature on is well-developed \citep[e.g. as a special case of the general theory of][]{novak2008tractability}, and optimality properties of kernel cubature have been established in the case where the Hilbert space reproduced by the kernel is equivalent to a Sobolev space on $\mathbb{S}^d$ \citep{krieg2021function}.
However, applications of kernel cubature on $\mathbb{S}^d$ are limited by the availability of kernels that satisfy theoretical assumptions and are computationally practical.
For example, the methodology of \citet{barp2022riemann} requires $K$ to reproduce a Hilbert space equivalent to an order-$\beta$ Sobolev space and to admit computable expressions for its derivatives in order for Stein's method to be applied.
Further, it is desirable from the perspective of empirical performance for the kernel to be \emph{intrinsically} defined on $\mathbb{S}^d$, to properly reflect the geometry of $\mathbb{S}^d$.
If such kernels can be found, then the \emph{Riemann--Stein kernel method} of \citet{barp2022riemann} facilitates approximation of posterior expectations with error $O(n^{-\beta/d})$, improving on the conventional $O(n^{-1})$ error of Markov chain Monte Carlo whenever $\beta > d/2$ (i.e. when the Sobolev--H\"{o}lder embedding is well-defined).
An improved understanding of Sobolev-type kernels therefore has the potential to eliminate the gap between the theory and practice of kernel cubature and kernel discrepancy on $\mathbb{S}^d$, and to drive the improvement of methodologies in application areas such as graphics rendering and directional statistics where kernel cubature and kernel discrepancies are used.

\subsection{Contribution}

The present article contributes to mathematical understanding of Sobolev-type kernels on $\mathbb{S}^d$, making in particular the following relevant contributions: 
\begin{description}
\item[Native and Sobolev Spaces.] We start with the native spaces (reproducing kernel Hilbert spaces) associated with kernels that are geodesically isotropic over $d$-dimensional spheres. We then define suitable Sobolev spaces with exponent $\beta$ for the case of $d$-dimensional spheres. 
Using Fourier analysis over spheres, we prove that a given kernel, $K$, belongs to a given Sobolev space with exponent $\beta$ if and only if the related Fourier--Schoenberg sequences (see subsequent sections) have a precise rate of decay. 
\item[Hilbert Spheres and Projections.] For $d$ and $K$ given, attaining the Fourier--Schoenberg sequence is extremely difficult. After noting that such sequences are more easily attainable on the (infinite dimensional) Hilbert sphere, we prove that there exists a projection operator that relates those sequences on the Hilbert sphere with their analogue on finite dimensional sphere. 

\item[Euclidean Kernels and Their Restriction.] One potential source of geodesically isotropic kernels, as alluded to above, is to simply restrict a radially isotropic kernel in the ambient Euclidean space $\R^{d+1}$ to the $d-$dimensional sphere. If one knows in advance the Fourier transform of the radial kernel then one can employ a
 formula due to  \cite{narcowich2002scattered}  to derive the corresponding Fourier--Schoenberg sequence of its restriction to the sphere.

\item[Some Parametric Classes of Kernels.] The above ingredients will allow us to prove that celebrated classes of kernels for $d$-dimensional spheres are actually Sobolev kernels, and can be used within the kernel cubature and kernel discrepancy machinery as previously described. 
\end{description} 
The plan of the paper is the following. Section \ref{sec2} provides a succinct mathematical background. Section \ref{sec3} starts with expository material on native spaces and then propose a definition of Sobolev space with given exponent. We prove that the Fourier--Schoenberg sequences determine the Sobolev space where the kernel {\em sits}. Section \ref{sec4} sets out a framework which delivers closed form expressions for Fourier--Schoenberg sequences derived via the  two routes described above, i.e., by projection from the Hilbert sphere and also by restricting a Euclidean radial function to the sphere. We conclude in Section \ref{sec5} by presenting three explicit parametric cases. Using the tools developed in Section \ref{sec4} we give closed form expressions for the Fourier--Schoenberg coefficients, we also provide their asymptotic rates of decay and hence conclude by specifying their corresponding Sobolev spaces. A short discussion in Section \ref{secdisc} closes the paper. The proofs of these results are technical and involved, so  they are deferred to a long Appendix.

\color{black}

\section{Background} \label{sec2}

Let  $d$ be a positive integer. We consider the $d$-dimensional unit sphere   $\mathbb{S}^d=\{\bm{\xi}\in\mathbb{R}^{d+1}, \|\bm{\xi}\|=1\}$,  embedded in $\R^{d+1}$, with $\|\cdot\|$ denoting Euclidean norm.  We shall also refer to the Hilbert sphere $ \mathbb{S}^{\infty} = \{\bm{\xi} \in\mathbb{R}^{\mathbb{N}}: \|\bm{x}\|=1\}$. We equip $\mathbb{S}^d$ with the great circle (geodesic) distance, defined as
$\theta(\bm{\xi},\bm{\eta}) = \arccos(\bm{\xi}^\top \bm{\eta}) \in [0,\pi]$,
 for $\bm{\xi},\bm{\eta}\in\mathbb{S}^d$, where $\top$ denotes transpose.  
A mapping $K: \mathbb{S}^d \times \mathbb{S}^d \to \mathbb{R} $ is called a kernel if it is positive definite, that is $\sum_{i,j=1}^N a_i a_k K(\bm{\xi}_i,\bm{\xi}_j) \ge 0$, for all finite system $\{ \bm{\xi}_i \}_{i=1}^N \subset \mathbb{S}^d$ and constants $a_1, \ldots a_N \in \mathbb{R}$. This paper works with geodesically isotropic kernels, so that $K(\bm{\xi},\bm{\eta}) = \sigma^2 \psi(\theta(\bm{\xi},\bm{\eta})$ for some continuous function $\psi : [0,\pi] \to \mathbb{R}$ with $\psi(0)=1$, and for $\sigma^2 >0$. We define $\Psi_d$ as the class of continuous functions $\psi$ being the isotropic part of a kernel $K$ in $\mathbb{S}^d$. We also define $\Psi_\infty = \bigcap_{d= 1}^\infty \Psi_d$, with the strict inclusion relation
\begin{equation}\label{inclusion}
 \Psi_1 \supset \Psi_2 \supset \cdots \supset \Psi_d \supset \cdots \supset \Psi_{\infty},
\end{equation} being strict.
\cite{schoenberg} showed that a continuous mapping $\psi:[0,\pi]\rightarrow \mathbb{R}$ belongs to the class $\Psi_d$ if and only if it can be uniquely written as
\begin{equation} \label{spectral_rep}
\psi(\theta) = \sum_{m=0}^{\infty} b_{m,d} \frac{P_{m}^{(d-1)/2}(\cos \theta)}{P_{m}^{(d-1)/2}(1)}, \qquad \theta \in [0,\pi],
\end{equation}
where $P_{m}^\lambda$ denotes the $\lambda$-Gegenbauer polynomial of degree $m$ \citep[][22.2.3]{abramowitz1964handbook}, and $\{b_{m,d}\}_{n=0}^\infty$ is a probability mass sequence. \cite{schoenberg} also showed that
 $\psi$ belongs to the class $\Psi_\infty$ if and only if
\begin{equation}
\label{hilbert_sphere}
\psi(\theta) = \sum_{m=0}^{\infty} b_{m} (\cos \theta)^m, \qquad \theta \in [0,\pi],
\end{equation}
with $\{b_{m}\}_{m=0}^\infty$ being again a probability mass sequence. We follow \cite{daley-porcu} and call the sequence $ \{ b_{m,d}\}_{m=0}^{\infty}$ in (\ref{spectral_rep}) a $d$-Schoenberg sequence. Analogously, we call
$\{ b_m \}_{m=0}^{\infty}$ a Schoenberg sequence. Throughout, for a given $d$ and a given element $\psi \in \Psi_{\infty} \subset \Psi_d$, we call  $(\{ b_{m} \}_{m=0}^{\infty}, \psi )$ and $(\{ b_{m,d} \}_{m=0}^{\infty}, \psi )$ a Schoenberg and a $d$-Schoenberg pair, respectively. Section \ref{sec4} proves that these pairs are related through an operator defined therein. \\
For $d = 1$, it is true that \citep[see][]{gneiting2013}
\begin{equation} \label{d-schoenberg}
 b_{0,1} = \int_{0}^{\pi} \psi(\theta) {\rm d} \theta \qquad  \mbox{ and } \qquad  b_{m,1} = \frac{2}{\pi} \int_{0}^{\pi} \cos(m\theta) \psi(\theta) {\rm d} \theta, \mbox{ for } m \geq 1,
\end{equation}
while for $d \geq 2$ we have
\begin{equation} \label{d-schoenberg2}
 b_{m,d} = \frac{2m+d-1}{2^{3-d}\pi}\frac{\left(\Gamma\left(\frac{d-1}{2}\right)\right)^{2}}{\Gamma(d-1)}\int_{0}^{\pi} \psi(\theta) P_{m}^{(d-1)/2}(\cos \theta)  \left (\sin \theta \right )^{d-1} {\rm d} \theta.
\end{equation}
where $\Gamma(\cdot)$ denotes the gamma function \cite[6.1.1]{abramowitz1964handbook}.

The property of strict positive definiteness is described here through the members $\psi$ of the classes $\Psi_d$. By strict we mean that the inequality in the definition positive definiteness becomes strict provided the real numbers $c_{1},\ldots, c_{n}$ are not all zero; we let  $\Psi_{d}^{+}\subset \Psi_{d}$ denote the class of continuous functions $\psi$ associated with a strictly positive definite kernel on $\S^{d}.$  Arguments in \cite{schoenberg} prove that if the elements of the $d$-Schoenberg sequence $ \{ b_{m,d}\}_{m=0}^{\infty}$ in (\ref{spectral_rep})  are  positive for all  $m \ge0$ then $\psi \in \Psi_{d}^{+}. $ This simple
condition is sufficient for our purposes but the reader may consult \citep{CheMenSun03} for a careful investigation of the necessary \textsl{and} sufficient conditions.

\subsection{Harmonic Analysis on Spheres}
We now consider members $\psi$ from $\Psi_{d}^{+}$ and invoke arguments in \cite{hubbert2015spherical} to dig into an alternative view of the expansion (\ref{spectral_rep}). Specifically, we resort to Fourier expansion through spherical harmonics, that is 
\begin{equation}\label{expansion}
\psi(\boldsymbol{\xi}^{T}\boldsymbol{\eta})=\sum\limits_{m=0}^{\infty}\sum\limits_{n=1}^{N_{m,d}}\widehat{\psi}_{m} {\cal{Y}}_{m,n}(\boldsymbol\xi){\cal{Y}}_{m,n}(\boldsymbol\eta),\quad \boldsymbol\xi,\boldsymbol\eta \in \S^{d},
\end{equation}
where $\{{\cal{Y}}_{m,n}:n=1,\ldots, N_{m,d}\}$ is a real orthonormal basis for the space of spherical harmonics of degree $m$  and the collection $\{{\cal{Y}}_{m,n}:n=1,\ldots, N_{m,d}, m\ge 0\}$
forms a real orthonormal basis for $L_{2}(\S^{d}).$  In addition, $\{ \widehat{\psi}_{m} \}_{m= 0}^{\infty}$ is referred to as the sequence of spherical Fourier coefficients for $\psi \in \Psi_{d}^{+}$ and these are related to the aforementioned $d$-Schoenberg coefficients via the formula \citep[cf][Equation 1.33]{hubbert2015spherical}
\begin{equation}\label{sh-connection}
b_{m,d}=\frac{\Gamma\left(\frac{d+1}{2}\right)N_{m,d}\widehat{\psi}_{m}}{2\pi^{\frac{d+1}{2}}},
\end{equation}
where $N_{m,d}$ denotes the dimension of the space of spherical harmonics of degree $m$ given by
\begin{equation}\label{dim}
N_{0,d}=1\quad {\rm{and}}\quad  N_{m,d}=2\left(m+\frac{d-1}{2}\right)\frac{(m+d-2)!}{(d-1)!m!}.
\end{equation}
\begin{tcolorbox}[title = Remark]
The identity (\ref{sh-connection}) proves that the $d$-Schoenberg $\{b_{m,d}\}_{m=0}^{\infty}$ and the Fourier $\{ \hat{\psi}_{m}\}_{m=0}^{\infty}$ sequences are linearly related. This justify the vague terminology Fourier--Schoenberg sequences used in the introduction to this paper. 
\end{tcolorbox}

The positive spherical Fourier coefficients of $\psi \in \Psi_{d}^{+}$ decay at a polynomial rate if there exist positive constants $A_1,A_2$ and $\gamma$ such that
\begin{equation}\label{sc-decay}
\frac{A_{1}}{(1+m)^{d+\gamma}}\le  \widehat{\psi}_{m} \le \frac{A_{2}}{(1+m)^{d+\gamma}},\quad m\ge 0.
\end{equation}
Using Stirling's formula \citep[][6.1.39]{abramowitz1964handbook}, that is 
\begin{equation}\label{Stirling}
\Gamma(az+b)\sim \sqrt{2\pi}e^{-az}(az)^{az+b-\frac{1}{2}},
\end{equation}
we  deduce that
$
N_{m,d}\sim \frac{2(m+1)^{d-1}}{(d-1)!},
$
from which one can also show that there are positive constants $C_{1}, C_{2},$ independent of $m,$ such that
$
C_{1}(m+1)^{d-1}\le N_{m,d}\le C_{2}(m+1)^{d-1}, \quad m\ge 0.
$
Thus, in view of (\ref{sh-connection}), we note that  the decay condition (\ref{sc-decay}) on the spherical Fourier coefficients can be recast  in terms of the $d-$Schoenberg sequence; specifically,  there exist constants  ${\mathcal{A}}_{1},{\mathcal{A}}_{2}$ such that 
\begin{equation}\label{sch-decay}
\frac{{\mathcal{A}}_{1}}{(1+m)^{1+\gamma}}\le  b_{m,d} \le \frac{{\mathcal{A}}_{2}}{(1+m)^{1+\gamma}},\quad m\ge 0.
\end{equation}

\subsection{Special Functions}

Hypergeometric functions will feature heavily in the course of this work and so we  briefly remind the reader that a general
hypergeometric
    function is
   defined by
    \begin{equation}\label{hyper}
\pFq{p}{q}{a_{1},\cdots,a_{p}}{b_{1},\cdots,b_{q}}{z} = \sum_{n=0}^{\infty} \frac{(a_{1})_{n} \cdots (a_{p})_{n}}{(b_{1})_{n} \cdots (b_{q})_{n}} \frac{z^{n}}{n!},
\end{equation}
where 
\begin{equation}\label{poch}(c)_n := c(c+1)\cdots(c+n-1) =
\frac{\Gamma(c+n)}{\Gamma(c)}, \hspace{0.1in} {\rm{for}}\,\,\,n \geq 1,
\end{equation} denotes the Pochhammer symbol, with $(c)_0 := 1$. 
Throughout, $B$ denotes the Beta function, defined for $x>0$ and $y>0$ by
\begin{equation}\label{beta}
B(x,y)=\int_{0}^{1}t^{x-1}(1-t)^{y-1}dt = \frac{\Gamma(x)\Gamma(y)}{\Gamma(x+y)}.
\end{equation}
\section{Native Spaces on Spheres} \label{sec3}
For a given $\psi \in \Psi_{d}^{+},$ whose spherical Fourier coefficients we assume to be positive, we define the following subspace of $L_{2}(\S^{d}):$
\begin{equation}\label{native}N_{\psi}=\Bigl\{f\in L_{2}(S^{d-1}):\|f\|_{\psi}^{2}=\sum_{m=0}^{\infty}\sum_{n=1}^{N_{n,d}}\frac{|\widehat{f}_{m,n}|^{2}}{\widehat{\psi}_{m}} <\infty \Bigr\},
\end{equation}
where $\widehat{f}_{m,n}$ denote the  expansion coefficients associated to the spherical Fourier series representation  
\begin{align*}
f = \sum_{m=0}^{\infty}\sum_{n=1}^{N_{n,d}}\widehat{f}_{m,n}{\cal{Y}}_{m,n},\quad 
{\rm{where}}\,\,\,\,
\widehat{f}_{m,n}=\int_{\S^{d}}f({\bf{x}}){\cal{Y}}_{m,n}({\bf{x}})d \omega_{d}({\bf{x}}).
\end{align*} 
We observe that
$\|\cdot\|_{\psi}$ is a norm induced by the inner-product
\begin{equation}\label{native-inner}
(f,g)_{\psi}:=\sum_{m=0}^{\infty}\sum_{n=1}^{N_{n,d}}\frac{\widehat{f}_{m,n}\widehat{g}_{m,n}}{\widehat{\psi}_{m}}.
\end{equation}

We shall call $N_{\psi}$ the Native space induced by $\psi.$ We observe that if we consider the function $\psi$ whose spherical Fourier coefficients are given by
$$
\widehat{\psi}_{m}:= \frac{1}{(1+m)^{2\gamma}},
$$
then the corresponding Native space coincides with the Sobolev space of order $\gamma,$ that is
\[
W_{2}^{\gamma}(\S^{d}):=\Bigl\{f\in L_{2}(S^{d-1}):\|f\|_{ W_{2}^{\gamma}}^{2}=\sum_{m=0}^{\infty}\sum_{n=1}^{N_{n,d}}(1+m)^{2\gamma}|\widehat{f}_{m,n}|^{2} <\infty \Bigr\}.
\]

More generally, if the spherical Fourier coefficients of $\psi \in \Psi_{d}^{+}$ satisfy the decay condition (\ref{sc-decay}), or equivalently, if its $d-$Scohenberg sequence satisfies (\ref{sch-decay}), then the induced Native space $N_{\psi}$ is norm equivalent to the Sobolev space $W_{2}^{\beta}(\S^d)$ where $\beta = \frac{d+\gamma}{2}.$ That is the two spaces agree as sets and the norms are equivalent since
\[
\sqrt{A_{1}}\|f\|_{\psi}\le \|f\|_{ W_{2}^{\beta}}\le \sqrt{A_{2}}\|f\|_{\psi}.
\]
{In particular, if $N_\psi$ and $N_{\psi'}$ are norm-equivalent, then their kernel discrepancies \eqref{eq: kernel discrepancy} define the same topology on the space of probability distributions on $\mathbb{S}^d$.  }
We observe that since $\beta >\frac{d}{2}$ then, as a consequence of the Sobolev embedding theorem, the Native space is $N_{\psi}$ is continuously embedded in $C(\S^{d}),$ the space of continuous functions on $\S^{d},$ and this implies that  $N_{\psi}$ is a reproducing kernel Hilbert space. The following result concerning Native spaces is adapted from \cite{levesley2005approximation} Proposition 3.1.

\begin{lemma}\label{RKHSlem}
Let $\Psi({\bf{\xi}},{\bf{\eta}}) = \psi(\bm{\xi}^{\top} \bm{\eta})$ denote a kernel induced by  $\psi \in \Psi_{d}^{+}$, having expansion (\ref{expansion}) according to a Fourier sequence $\{ \widehat{\psi}_m\}_{m=0}^{\infty}$ of strictly positive coefficients. The corresponding Native space $N_{\psi}$ (\ref{native}) is a reproducing kernel Hilbert space with reproducing kernel $\Psi$.
\end{lemma}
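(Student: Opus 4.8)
The plan is to verify the two defining properties of a reproducing kernel Hilbert space directly from the spectral data. First I would confirm that $\Psi$ itself lies in $N_\psi$ and that point evaluation is continuous; the reproducing identity will then follow by explicit computation with the spherical harmonic expansion.

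The key computational tool is the \emph{addition theorem} for spherical harmonics, which states that $\sum_{n=1}^{N_{m,d}} \mathcal{Y}_{m,n}(\boldsymbol{\xi}) \mathcal{Y}_{m,n}(\boldsymbol{\eta})$ equals a fixed multiple of the Gegenbauer polynomial $P_m^{(d-1)/2}(\boldsymbol{\xi}^\top \boldsymbol{\eta})$; this is precisely what underlies the expansion (\ref{expansion}). First I would fix $\boldsymbol{\eta} \in \S^d$ and regard $\Psi(\cdot, \boldsymbol{\eta})$ as a function on $\S^d$. Reading off its spherical Fourier coefficients from (\ref{expansion}), the coefficient indexed by $(m,n)$ is $\widehat{\psi}_m \, \mathcal{Y}_{m,n}(\boldsymbol{\eta})$. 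Substituting into the native space norm (\ref{native}) gives
\[
\|\Psi(\cdot,\boldsymbol{\eta})\|_\psi^2 = \sum_{m=0}^\infty \sum_{n=1}^{N_{m,d}} \frac{\widehat{\psi}_m^2 \, \mathcal{Y}_{m,n}(\boldsymbol{\eta})^2}{\widehat{\psi}_m} = \sum_{m=0}^\infty \widehat{\psi}_m \sum_{n=1}^{N_{m,d}} \mathcal{Y}_{m,n}(\boldsymbol{\eta})^2,
\]
and the inner sum is a constant multiple of $N_{m,d}$ by the addition theorem evaluated at coincident points. Using the bounds $\widehat{\psi}_m \le A_2 (1+m)^{-(d+\gamma)}$ from (\ref{sc-decay}) together with $N_{m,d} \le C_2 (1+m)^{d-1}$, the summand is $O((1+m)^{-1-\gamma})$, which is summable for $\gamma > 0$; hence $\Psi(\cdot,\boldsymbol{\eta}) \in N_\psi$.

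Next I would establish the reproducing property. For any $f = \sum_{m,n} \widehat{f}_{m,n} \mathcal{Y}_{m,n} \in N_\psi$, the inner product (\ref{native-inner}) with $\Psi(\cdot,\boldsymbol{\eta})$ telescopes:
\[
(f, \Psi(\cdot,\boldsymbol{\eta}))_\psi = \sum_{m=0}^\infty \sum_{n=1}^{N_{m,d}} \frac{\widehat{f}_{m,n} \, \widehat{\psi}_m \, \mathcal{Y}_{m,n}(\boldsymbol{\eta})}{\widehat{\psi}_m} = \sum_{m=0}^\infty \sum_{n=1}^{N_{m,d}} \widehat{f}_{m,n} \, \mathcal{Y}_{m,n}(\boldsymbol{\eta}) = f(\boldsymbol{\eta}),
\]
the factors of $\widehat{\psi}_m$ cancelling cleanly. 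This is the reproducing identity $f(\boldsymbol{\eta}) = (f, \Psi(\cdot,\boldsymbol{\eta}))_\psi$, and together with the previous step it identifies $\Psi$ as the reproducing kernel.

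The main obstacle is justifying the interchange of summation that makes the last display rigorous, namely that the pointwise series for $f(\boldsymbol{\eta})$ converges and equals the inner product. The clean route is to invoke the Sobolev embedding already noted in the text: since the decay (\ref{sc-decay}) forces $N_\psi$ to be norm-equivalent to $W_2^\beta(\S^d)$ with $\beta = (d+\gamma)/2 > d/2$, the embedding $N_\psi \hookrightarrow C(\S^d)$ guarantees that point evaluation $f \mapsto f(\boldsymbol{\eta})$ is a bounded linear functional. By the Riesz representation theorem this functional is represented by some element of $N_\psi$, and the computation above identifies that representer as $\Psi(\cdot,\boldsymbol{\eta})$; the absolute convergence needed to rearrange the double sum follows from Cauchy--Schwarz in $N_\psi$ applied to the finite partial sums, since both $\|f\|_\psi$ and $\|\Psi(\cdot,\boldsymbol{\eta})\|_\psi$ are finite. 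Beyond this convergence bookkeeping the argument is entirely formal, so I expect the proof to be short, essentially reproducing the adaptation of \citet{levesley2005approximation} Proposition 3.1 cited in the statement.
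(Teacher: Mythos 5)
Your overall skeleton is the standard one (and the one underlying the cited Proposition 3.1 of Levesley et al., to which the paper defers without writing out a proof): show the kernel sections $\Psi(\cdot,\boldsymbol{\eta})$ lie in $N_\psi$, then obtain the reproducing identity by cancellation of the weights $\widehat{\psi}_m$. However, there is a genuine logical gap: at both places where analysis is actually needed you invoke the decay condition (\ref{sc-decay}) --- once to bound $\|\Psi(\cdot,\boldsymbol{\eta})\|_\psi^2$, and again (via the Sobolev embedding $N_\psi \hookrightarrow C(\S^d)$, which also rests on (\ref{sc-decay})) to get continuity of point evaluation. But (\ref{sc-decay}) is \emph{not} a hypothesis of the lemma. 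The lemma assumes only $\psi \in \Psi_d^+$ with strictly positive spherical Fourier coefficients; the decay condition enters the paper only later, in Remark \ref{onSob}, where it upgrades the RKHS statement to norm equivalence with $W_2^\beta(\S^d)$. As written, your argument proves a strictly weaker statement, and the separation between the two results (RKHS in general; Sobolev equivalence under decay) is precisely the structure of Section \ref{sec3} that your proof collapses.

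The fix is short and makes the extra hypothesis unnecessary. By the addition theorem at coincident points, $\sum_{n=1}^{N_{m,d}} \mathcal{Y}_{m,n}(\boldsymbol{\eta})^2 = N_{m,d}/\omega_d$ with $\omega_d = 2\pi^{(d+1)/2}/\Gamma\left(\tfrac{d+1}{2}\right)$, so your first display gives
\[
\|\Psi(\cdot,\boldsymbol{\eta})\|_\psi^2 \;=\; \sum_{m=0}^\infty \widehat{\psi}_m \, \frac{N_{m,d}}{\omega_d} \;=\; \sum_{m=0}^\infty b_{m,d} \;=\; \psi(0) \;<\; \infty,
\]
where the middle equality is exactly the relation (\ref{sh-connection}) and the last uses that $\{b_{m,d}\}$ is a (summable) probability mass sequence from Schoenberg's representation (\ref{spectral_rep}) --- no rate of decay required. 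Continuity of point evaluation then needs no Sobolev embedding: for $f \in N_\psi$, Cauchy--Schwarz gives
\[
\sum_{m,n} \bigl|\widehat{f}_{m,n}\,\mathcal{Y}_{m,n}(\boldsymbol{\eta})\bigr| \;\le\; \|f\|_\psi \left( \sum_{m,n} \widehat{\psi}_m\, \mathcal{Y}_{m,n}(\boldsymbol{\eta})^2 \right)^{1/2} = \|f\|_\psi\,\sqrt{\psi(0)},
\]
so the spherical Fourier series of $f$ converges absolutely and uniformly; this simultaneously supplies the continuous representative on which point evaluation is defined and justifies the term-by-term rearrangement in your reproducing-identity computation. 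Two smaller points: your claim that the Sobolev route is ``the clean route'' has the logic backwards, since in this paper the embedding is a \emph{consequence} of (\ref{sc-decay}), not something available under the lemma's hypotheses; and the lemma asserts $N_\psi$ \emph{is} a Hilbert space, so completeness (routine, as $N_\psi$ is isometric to a weighted $\ell^2$ space over the coefficients) deserves at least a sentence, which your proposal omits entirely.
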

\begin{tcolorbox}[title = Important Connection ]
\begin{remark}\label{onSob}
Let $\psi \in \Psi_{d}^{+}$ induce a kernel $\Psi$ as in Lemma \ref{RKHSlem}. If the spherical Fourier coefficients of $\Psi$ satisfy (\ref{sc-decay}), then $\Psi$ is a reproducing kernel for a space that is norm equivalent to the Sobolev space $W_{2}^{\beta}(\S^d)$ where $\beta = \frac{d+\gamma}{2}$.
\end{remark}
 \end{tcolorbox}

For two mappings $f,g: \mathbb{N}_0 \to \mathbb{R}$, we say that $ f(n)\sim g(n)$ if and only if 
\begin{equation}\label{Sim} \lim_{n\to \infty}\frac{f(n)}{g(n)}=1.
 \end{equation}
  A direct implication of (\ref{Sim}) is that there exists positive constants $A_{1}$ and $A_{2}$ such that
  \[
  A_{1}g(n)\le f(n)\le A_{2}g(n),\quad n\ge 0.
  \]
  In view of Remark \ref{onSob} we observe that by establishing asymptotic decay rates for various classes of covariance function we will be able to establish which order Sobolev space the covariance kernels are reproducing for.

\def\blambda{\boldsymbol{\lambda}}

\section{Quantifying Smoothness on $d$-dimensional Spheres} \label{sec4}

In this paper we will consider parametric classes of members  $\psi$ of the class $\Psi_{\infty}^{d}$.  We will access these via the two approaches described Section 1. Specifically we will either take $\psi \in \Psi_{\infty}^{+}$ as a starting point and consider its projection to $\S^{d}$ or we will take a positive definite radial kernel as a starting point and consider its restriction to $\S^{d}.$ In both cases we will derive closed form expressions for the associated $d-$Schoenberg sequences and by examining their asymptotic decay we can quantify the  smoothness properties which, in turn,  determines whether the induced Native space is norm equivalent to a Sobolev space of a certain order.

\subsection{Projecting $\Psi_{\infty}^{+}$ to $\Psi_{d}^{+}$.}

Many of the well known parametric classes in numerical analysis and statistics are defined through members of the class $\Psi_{\infty}^{+},$ i.e., the Schoenberg sequence $(b_{m})_{m=0}^{\infty}$ is known for the representation (\ref{hilbert_sphere}). This is an obstacle in the case where one wants employ such functions on a finite dimensional sphere, where one requires the $d-$Schoenberg sequence in order to quantify the smoothness of the kernel and consequently to state whether the induced Native space is norm equivalent to a Sobolev space of a certain order. In order to circumvent this we consider the following  projection operator which we will show maps the Schoenberg sequence of $\psi \in \Psi_{\infty}$ to its unique $d-$Schoenberg sequence when viewed as a member of $ \Psi_{d}..$

\begin{tcolorbox}[title = The Projection Operator]
For a Schoenberg sequence $\{b_m\}_{n=0}^{\infty}$,we define the operator $\Upsilon_d$:  
\begin{equation}
\label{operator} \forall m \in \mathbb{N}_0, \; \Upsilon_d \left( b_{m} \right ) =\frac{\sqrt{\pi}}{2^{m+d-2}\Gamma\left(\frac{d}{2}\right)}
\frac{\Gamma\left(m+d-1\right)}{m!\Gamma\left(m+\frac{d-1}{2}\right)}
\sum_{j=0}^{\infty}b_{m+2j}\frac{(m+2j)!}{j!2^{2j}\left(m+\frac{d+1}{2}\right)_{j}}
\end{equation} 
where  $(x)_j=\Gamma(x+j)/\Gamma(x)$ denotes the Pochhammer symbol \citep[][6.1.22]{abramowitz1964handbook}.
\end{tcolorbox}

\vspace{0.3cm}

 \begin{prop}[Projection Operator]
 \label{lem:d-schoen}
Let $\Upsilon_d$ be as defined through (\ref{operator}). Then, $\Upsilon_d$ maps $\Psi_{\infty}$ ($\Psi_{\infty}^{+}$) into $\Psi_{d}$ ($\Psi_{d}^{+}$). That is, let $b_{m,d}$ be defined as $b_{m,d}=\Upsilon_d(b_m)$, for $m \in \mathbb{N}$ and for $\{ b_m \}_{m=0}^{\infty}$ a Schoenberg sequence. Then, the sequence $\{ b_{m,d}\}_{m=0}^{\infty}$ is a $d$-Schoenberg sequence.
\end{prop}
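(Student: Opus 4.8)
The plan is to exploit the inclusion $\Psi_\infty \subset \Psi_d$ from (\ref{inclusion}) together with the uniqueness of the Gegenbauer representation (\ref{spectral_rep}). Given a Schoenberg sequence $\{b_m\}_{m=0}^\infty$, let $\psi\in\Psi_\infty$ be the associated function, so $\psi(\theta)=\sum_{m=0}^\infty b_m(\cos\theta)^m$ by (\ref{hilbert_sphere}). Since $\psi\in\Psi_d$ as well, Schoenberg's theorem furnishes a \emph{unique} $d$-Schoenberg (probability mass) sequence $\{b_{m,d}\}$ realising (\ref{spectral_rep}). It therefore suffices to compute these coefficients explicitly and to check that they agree with $\Upsilon_d(b_m)$; membership in $\Psi_d$, and hence the probability-mass property of the output, is then automatic. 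For clarity I treat $d\ge 2$ below, the case $d=1$ following analogously from the cosine expansion (\ref{d-schoenberg}).

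To compute the coefficients I would pass from the monomial to the Gegenbauer basis. Writing $\lambda=(d-1)/2$ and $x=\cos\theta$, the standard inversion of the Gegenbauer expansion reads
\begin{equation}\label{plan-conn}
x^n=\frac{n!}{2^n}\sum_{i=0}^{\lfloor n/2\rfloor}\frac{\lambda+n-2i}{i!\,(\lambda)_{n-i+1}}\,P_{n-2i}^{\lambda}(x).
\end{equation}
Substituting (\ref{plan-conn}) into $\psi=\sum_n b_n x^n$ and interchanging the summations --- legitimate since the connection coefficients are nonnegative and, evaluated at $x=1$, carry unit total mass, so the rearranged double series is dominated termwise by $\sum_n b_n=\psi(0)=1$ --- I collect the coefficient of $P_m^\lambda$ by setting $n=m+2j$, $j\ge 0$, to obtain
\begin{equation}\label{plan-coeff}
\psi(\theta)=\sum_{m=0}^\infty \Bigl[(\lambda+m)\sum_{j=0}^\infty b_{m+2j}\frac{(m+2j)!}{2^{m+2j}\,j!\,(\lambda)_{m+j+1}}\Bigr]P_m^\lambda(\cos\theta).
\end{equation}
Comparing (\ref{plan-coeff}) with (\ref{spectral_rep}) and using $P_m^\lambda(1)=(2\lambda)_m/m!=(d-1)_m/m!$, the bracketed quantity multiplied by $P_m^\lambda(1)$ must equal $b_{m,d}$.

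The remaining work is purely algebraic bookkeeping, and this is where the main obstacle lies. I would split the Pochhammer factor via $(\lambda)_{m+j+1}=(\lambda)_{m+1}(\lambda+m+1)_j$, noting $\lambda+m+1=m+\tfrac{d+1}{2}$, which isolates exactly the $j$-sum $\sum_j b_{m+2j}(m+2j)!/\bigl(j!\,2^{2j}(m+\tfrac{d+1}{2})_j\bigr)$ appearing in (\ref{operator}). The leftover prefactor, after cancelling $\lambda+m$ against one factor of $(\lambda)_{m+1}$ through $\Gamma(z+1)=z\Gamma(z)$ and extracting $2^{-m}$, reduces to $\tfrac{\Gamma(m+d-1)}{m!\,\Gamma(m+\frac{d-1}{2})}\cdot\tfrac{\Gamma(\frac{d-1}{2})}{2^m\Gamma(d-1)}$. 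The identity $\Gamma(\tfrac{d-1}{2})/\Gamma(d-1)=\sqrt\pi/(2^{d-2}\Gamma(\tfrac d2))$, which is precisely the Legendre duplication formula applied at $z=\tfrac{d-1}{2}$, then converts this into the constant of (\ref{operator}), completing the identification $b_{m,d}=\Upsilon_d(b_m)$. The delicate points are thus the correct splitting of the Pochhammer symbol and the clean deployment of the duplication formula; the interchange of summation can alternatively be bypassed by inserting the monomial expansion directly into the orthogonality integral (\ref{d-schoenberg2}) and evaluating the Gegenbauer moments, which carry the same connection coefficients.

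Finally, for the strict assertion $\Upsilon_d:\Psi_\infty^+\to\Psi_d^+$, every factor in (\ref{operator}) is strictly positive, so $b_{m,d}>0$ as soon as some $b_{m+2j}>0$. Since $\psi\in\Psi_\infty^+$ forces the positive Schoenberg coefficients of each parity to occur infinitely often, this holds for every $m$, and the sufficient condition recalled in Section \ref{sec2} then yields $\psi\in\Psi_d^+$.
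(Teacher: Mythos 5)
Your proof is correct and takes essentially the same route as the paper's: your monomial-to-Gegenbauer inversion formula is exactly Bingham's Lemma 1 (the paper's equation (\ref{power})) written without the $P^{\lambda}_{n-2i}(1)$ normalisation, and your subsequent reindexing $n=m+2j$, Pochhammer splitting $(\lambda)_{m+j+1}=(\lambda)_{m+1}(\lambda+m+1)_j$ and duplication-formula bookkeeping reproduce the paper's computation (\ref{link-main}) step for step. If anything you go slightly beyond the paper by justifying the interchange of summations and by explicitly handling the strict case $\Psi_{\infty}^{+}\to\Psi_{d}^{+}$ via the parity characterisation of strict positive definiteness on the Hilbert sphere, both of which the paper's proof leaves implicit.
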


\begin{proof}
According to \cite{bingham1973positive} Lemma 1,  the following identity holds

\begin{equation} \label{power}
 (\cos\theta)^m= \frac{m!\Gamma\left(\frac{d-1}{2}\right)}{2^{m}(d-2)!}\sum_{0\le 2k\le m}\frac{(m-2k+\frac{d-1}{2})(m-2k+d-2)!}{k!(m-2k)!\Gamma\left(m-k+\frac{d+1}{2}\right)}\frac{P_{m-2k}^{(d-1)/2}(\cos \theta)}{P_{m-2k}^{(d-1)/2}(1)}.
\end{equation}

This allows us to deduce that
\begin{equation} \label{link1}
\begin{aligned}
\psi(\theta) &= \sum_{m=0}^{\infty} b_{m} (\cos \theta)^m \\
&= \frac{\Gamma\left(\frac{d-1}{2}\right)}{(d-2)!}\sum_{m=0}^{\infty} \frac{b_{m}m!}{2^{m}}\sum_{0\le 2k\le m}\frac{(m-2k+\frac{d-1}{2})(m-2k+d-2)!}{k!(m-2k)!\Gamma\left(m-k+\frac{d+1}{2}\right)}\frac{P_{m-2k}^{(d-1)/2}(\cos \theta)}{P_{m-2k}^{(d-1)/2}(1)}.
\end{aligned}
\end{equation}

By inspection, the coefficient of $\frac{P_{m}^{(d-1)/2}(\cos \theta)}{P_{m}^{(d-1)/2}(1)}$ is given by

\begin{equation} \label{link-main}
\begin{aligned}
b_{m,d}&=\frac{\Gamma\left(\frac{d-1}{2}\right)\left(m+\frac{d-1}{2}\right)}{2^m}\frac{(m+d-2)!}{m!(d-2)!}\sum_{j=0}^{\infty}b_{m+2j}\frac{(m+2j)!}{j!2^{2j}\Gamma\left(m+j+\frac{d+1}{2}\right)}\\
&=\frac{\Gamma\left(\frac{d-1}{2}\right)}{\Gamma(d-1)}\frac{\left(m+\frac{d-1}{2}\right)}{\Gamma\left(m+\frac{d+1}{2}\right)}\frac{1}{2^{m}}\frac{(m+d-2)!}{m!}\sum_{j=0}^{\infty}b_{m+2j}\frac{(m+2j)!}{j!2^{2j}\left(m+\frac{d+1}{2}\right)_{j}}\\
&=\frac{\sqrt{\pi}}{2^{m+d-2}\Gamma\left(\frac{d}{2}\right)}\frac{\Gamma\left(m+d-1\right)}{m!\Gamma\left(m+\frac{d-1}{2}\right)}\sum_{j=0}^{\infty}b_{m+2j}\frac{(m+2j)!}{j!2^{2j}\left(m+\frac{d+1}{2}\right)_{j}},
\end{aligned}
\end{equation}
where the final line follows from $\Gamma(x+1)=x\Gamma(x)$ and an application the Gamma function identity \citep[][6.1.18]{abramowitz1964handbook},
\begin{equation}\label{gamma-double}
\frac{\Gamma(2z)}{\Gamma(z)}=\frac{2^{2z-1}}{\sqrt{\pi}}\Gamma\left(z+\frac{1}{2}\right).
\end{equation}
\end{proof}

\subsection{Restricting radial kernels to the sphere.}

An alternative source of members of $\Psi_{d}^{+}$ can be accessed by choosing a radial kernel $\phi$ that is known to be positive definite on $\R^{d+1}$ and then defining its restriction to the sphere. Specifically, we suppose that $d$ is a fixed space dimension and we take a parametric family $\{\phi(\cdot,\blambda), \; \blambda \in \Theta \subset \mathbb{R}^p \}$ of radial functions that are positive definite on $\R^{d+1}.$  The {\it chordal distance} on $\mathbb{S}^d$ is  connected to the {\it geodesic distance}  via
\begin{equation}
\label{chordal} d_{{\rm CH}}(\boldsymbol\xi_1,\boldsymbol\xi_2)= \|\boldsymbol\xi_1-\boldsymbol\xi_2\| 
=\sqrt{2-2\cos\left( \theta(\boldsymbol\xi_1,\boldsymbol\xi_2)\right)}
 \qquad \boldsymbol\xi_1,\boldsymbol\xi_2\in\mathbb{S}^d.
\end{equation}
Using this we define 
\begin{equation}\label{restriction}
\psi(\theta,\blambda):=\phi\left(\sqrt{2-2\cos\left( \theta(\boldsymbol\xi_1,\boldsymbol\xi_2)\right)},\blambda\right),
\end{equation}
and, by construction, this restricted family belongs to $\Psi_{d}^{+}.$  A crucial ingredient for computing the $d-$Schoenberg coefficients of the restricted family is prior knowledge of the  $d-$dimensional radial Fourier transform of $\phi.$

\begin{mydef}
Let $\phi(t)$ denote a continuous real valued function on $[0,\infty).$  The $d-$dimensional radial Fourier transform of $\phi$ is defined by \cite{stein-book}
 \begin{equation}\label{specden}
\widehat{\phi}(r)=\mathcal{F}_{d}\phi(r)\:=r^{-\frac{d-2}{2}}\int_{0}^{\infty}\phi(t)t^{\frac{d}{2}}J_{\frac{d-2}{2}}(rt)dt,\,\,\,\, r\ge0,
\end{equation}
where $J_{\nu}(\cdot)$ denotes the  Bessel function of the first kind with order $\nu.$
We note that a sufficient condition for $\widehat{\phi}(r)$ to be well-defined is that $\phi(t)t^{d-1}$ is absolutely integrable.
\end{mydef}

In this framework the $d-$Schoenberg coefficients associated to members of $\Psi_{d}^{+}$ that are defined via (\ref{restriction}) is given by the following formula  (cf. \cite{narcowich2002scattered} Theorem 4.1)
\begin{equation}
\label{linkup}
\begin{aligned}
b_{m,d}&=(2\pi)^{\frac{d+1}{2}}\kappa_{m,d}\int_{0}^{\infty}tJ_{m+\frac{d-1}{2}}^{2}(t)\widehat{\phi}(t)dt,\\
{\rm{where}}\quad \kappa_{m,d}&=\frac{\Gamma\left(\frac{d+1}{2}\right)}{2\pi^{\frac{d+1}{2}}\Gamma(d)}\frac{(2m+d-1)(m+d-2)!}{m!}.
\end{aligned}
\end{equation}

In the next section  we will use the results presented here on Hilbert space projections and on spherical restrictions to derive closed form expressions for the $d-$Schoenberg coefficients for different classes of parameterised families belonging to $\Psi_{d}^{+}.$

\section{Parameterised families and Native Sobolev spaces } \label{sec5}

For each of the  families of geodesically isotropic kernels presented in this section we will provide: 
\begin{description}
\item[{\bf 1.}] A closed form expression of their $d-$Schoenberg sequence. 
\item[{\bf 2.}] The asymptotic rate of decay of their $d-$Schoenberg sequence.  
\item[{\bf 3}.] The Native Sobolev space for which the kernels are reproducing for.
\end{description}

\subsection{ The {\bf Mat{\'e}rn} Class of functions }

For $\nu,\alpha>0,$ the Mat{\'e}rn class of functions are  well-known positive definite radial kernels defined
as \citep{stein-book}
\[
 {\cal M}_{\nu, \alpha} (r) = \frac{2^{1-\nu}}{\Gamma(\nu)} \left ( \frac{r}{\alpha} \right )^{\nu} {\cal K}_{\nu}\left ( \frac{r}{\alpha} \right ), 
\]
 with ${\cal K}_{\nu}$ a modified Bessel function of the second kind of order $\nu$ \citep{abramowitz1964handbook}[9.6.22]. The Mat{\'e}rn class has been especially popular in spatial statistics after \cite{stein-book}. We consider the restriction of this family to the sphere which we define as
 \begin{equation}\label{sphMatern}
 \psi_{{\cal M}}(\theta, \blambda):= {\cal M}_{\nu, \alpha}(\sqrt{2 - 2 \cos (\theta)}), \,\,\, {\rm{for}}\,\,\, \blambda=(\nu,\alpha)^{\top} \in (0,\infty)^2.
 \end{equation}
  
 In order to apply (\ref{linkup}) and derive the $d-$Schoenberg coefficients associated to the family $\psi_{{\cal M}}(\theta, \blambda)$ we require prior knowledge of the radial Fourier transform of $ {\cal M}_{\nu, \alpha} (r).$ This is given in the following result.
  \begin{lemma}
 \label{lem:FTMatern}
Let $\alpha$ and $\nu$  be positive real numbers. The $d-$dimensional radial Fourier transform of the  Mat{\'e}rn kernel  ${\cal M}_{\nu, \alpha} (r)$  is given by
\begin{equation}\label{MatFT}
\widehat{{\cal M}_{\nu, \alpha} }(r) = \mathcal{F}_{d}{\cal M}_{\nu, \alpha}(r)
=\frac{2^{\frac{d}{2}}\Gamma\left(\nu+\frac{d}{2}\right)}{\alpha^{2\nu}\Gamma\left(\nu\right)}
\frac{1}{\left(\frac{1}{\alpha^{2}}+r^2\right)^{\nu+\frac{d}{2}}}.
\end{equation}
\end{lemma}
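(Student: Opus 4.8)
The plan is to evaluate the Hankel-type integral in the definition \eqref{specden} directly. Substituting the Mat\'ern kernel and pulling out the constants from its definition, the transform becomes
\[
\mathcal{F}_{d}\mathcal{M}_{\nu,\alpha}(r)=\frac{2^{1-\nu}}{\Gamma(\nu)\,\alpha^{\nu}}\;r^{-\frac{d-2}{2}}\int_{0}^{\infty}t^{\nu+\frac{d}{2}}\,\mathcal{K}_{\nu}\!\left(\tfrac{t}{\alpha}\right)J_{\frac{d-2}{2}}(rt)\,\mathrm{d}t ,
\]
so the whole problem reduces to computing a single integral of the form $\int_{0}^{\infty}t^{\mu}\mathcal{K}_{p}(at)\,J_{q}(bt)\,\mathrm{d}t$ with $\mu=\nu+\tfrac{d}{2}$, $p=\nu$, $q=\tfrac{d-2}{2}$, $a=1/\alpha$ and $b=r$.

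First I would check that the integral is well defined: near the origin $t^{\nu}\mathcal{K}_{\nu}(t/\alpha)$ stays bounded (in fact tends to a constant), while for large $t$ the factor $\mathcal{K}_{\nu}(t/\alpha)$ decays exponentially, so the integrand is absolutely integrable and the sufficient condition recorded after \eqref{specden} holds for all $\nu,\alpha>0$. Next I would invoke a standard tabulated Weber--Schafheitlin integral (e.g.\ Gradshteyn and Ryzhik, formula 6.576.4), which expresses $\int_{0}^{\infty}t^{\mu}\mathcal{K}_{p}(at)J_{q}(bt)\,\mathrm{d}t$ as a Gauss hypergeometric function ${}_2F_1$ with argument $-b^{2}/a^{2}=-r^{2}\alpha^{2}$, denominator parameter $q+1=\tfrac{d}{2}$, and numerator parameters given by the half-sums $\tfrac{1}{2}(q+\mu+p+1)$ and $\tfrac{1}{2}(q+\mu-p+1)$. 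A short computation shows these numerator parameters are exactly $\nu+\tfrac{d}{2}$ and $\tfrac{d}{2}$, accompanied by the Gamma prefactors $\Gamma(\nu+\tfrac{d}{2})$ and $\Gamma(\tfrac{d}{2})$.

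The crucial observation, and the step that produces the clean rational answer, is that the denominator parameter $\tfrac{d}{2}$ of the ${}_2F_1$ coincides with one of its numerator parameters. The series therefore collapses to a ${}_1F_0$, and by the binomial theorem ${}_1F_0(\nu+\tfrac{d}{2};-r^{2}\alpha^{2})=(1+r^{2}\alpha^{2})^{-(\nu+d/2)}$. Substituting this back, the stray $\Gamma(\tfrac{d}{2})$ cancels against $\Gamma(q+1)$, the factor $r^{(d-2)/2}$ cancels against the prefactor $r^{-(d-2)/2}$, the powers of $2$ combine to $2^{d/2}$, and the powers of $\alpha$ combine; writing $(1+r^{2}\alpha^{2})^{-(\nu+d/2)}=\alpha^{2\nu+d}(1/\alpha^{2}+r^{2})^{-(\nu+d/2)}$ then yields precisely \eqref{MatFT}.

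I expect the main obstacle to be purely bookkeeping: keeping track of the exponents of $2$, $\alpha$ and $r$ carried by the hypergeometric prefactor, and confirming that the parameter/convergence conditions attached to the tabulated integral are met, using analytic continuation in $r$ to cover the regime $r\alpha\ge 1$ (the closed form and the ${}_2F_1$ agree as analytic functions of $r$ by the identity theorem). An alternative, essentially equivalent route would exploit that $\mathcal{F}_{d}$ is, up to normalisation, an involution on radial functions, so that one instead verifies that $\mathcal{F}_{d}$ applied to the rational right-hand side of \eqref{MatFT} returns the Mat\'ern kernel; this reduces to the classical Bessel-potential integral but requires the same hypergeometric reduction in disguise.
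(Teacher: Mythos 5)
Your proposal is correct and follows essentially the same route as the paper: both reduce the transform to the single tabulated integral $\int_{0}^{\infty}t^{\nu+\frac{d}{2}}\mathcal{K}_{\nu}(t/\alpha)J_{\frac{d-2}{2}}(rt)\,\mathrm{d}t$ (Prudnikov et al.\ 2.16.21.1 in the paper, the equivalent Gradshteyn--Ryzhik 6.576.4 in yours) and then exploit the coincidence of the numerator and denominator parameter $\tfrac{d}{2}$ to collapse the resulting ${}_2F_1$ via ${}_2F_1(a,b;b;z)=(1-z)^{-a}$. Your extra remarks on absolute integrability and on analytic continuation past $r\alpha=1$ are minor refinements of the same argument, which the paper leaves implicit.
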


\begin{proof}

Using (\ref{specden}) the $d-$dimensional radial Fourier transform of ${\cal M}_{\nu, \alpha} (r)$ is given by
\[
\begin{aligned}
\widehat{{\cal M}_{\nu, \alpha} }(r)& =r^{-\left(\frac{d-2}{2}\right)}\int_{0}^{\infty}{\cal M}_{\nu, \alpha}(t)t^{\frac{d}{2}}J_{\frac{d-2}{2}}(rt)dt \\
&=\frac{1}{r^{\frac{d-2}{2}}}\frac{2^{1-\nu}}{\Gamma(\nu)} \frac{1}{\alpha^{\nu}}\int_{0}^{\infty}t^{\nu+\frac{d+2}{2}-1}{\cal K}_{\nu}\left ( \frac{t}{\alpha} \right )J_{\frac{d-2}{2}}(rt)dt.
\end{aligned}
\]

The following identity is taken from \cite{prudnikov1981integralsv2} eq 2.16.21.1
\[
\begin{aligned}
&\int_{0}^{\infty}t^{\beta-1}{\cal K}_{\nu}\left(ct  \right)J_{\mu}(bt)dt\\
&=2^{\beta-1}b^{\mu}\left(\frac{1}{c}\right)^{\beta+\mu}\frac{\Gamma\left(\frac{\beta+\mu+\nu}{2}\right)\Gamma\left(\frac{\beta+\mu-\nu}{2}\right)}{\Gamma(\mu+1)}
\pFq{2}{1}{\frac{\beta+\mu+\nu}{2},\frac{\beta+\mu-\nu}{2}}{\mu+1}{-\left(\frac{b}{c}\right)^2}
\end{aligned}
\]
%

   Setting the following parameters:
 \[
 b=r, \,\,\, c = \frac{1}{\alpha}\,\,\,\beta=\nu+\frac{d+2}{2}\,\,\,\mu=\frac{d-2}{2}\rightarrow \beta+\mu+\nu = 2\nu+d,\,\,{\rm{and}}\,\,\, \beta+\mu-\nu = d,
 \]
 leads us to deduce that
 \begin{equation}
 \label{FT_hyper}
 \widehat{{\cal M}_{\nu, \alpha} }(r)=2^{\frac{d}{2}}\alpha^{d}\frac{\Gamma\left(\nu+\frac{d}{2}\right)}{\Gamma(\nu)}
 \pFq{2}{1}{\nu+\frac{d}{2},\frac{d}{2}}{\frac{d}{2}}{-\alpha^2 r^2}
 \end{equation}
The following identity is taken from  \citep[][15.1.8]{abramowitz1964handbook} ${}_{2}F_{1}(a,b;b;z) =(1-z)^{-a}.$ Inserting this into the above leads to

  \[
  \widehat{{\cal M}_{\nu, \alpha} }(r) = \frac{2^{\frac{d}{2}}\alpha^{d}\Gamma\left(\nu+\frac{d}{2}\right)}{\Gamma\left(\nu\right)}\frac{1}{(1+\alpha^{2}r^2)^{\nu+\frac{d}{2}}},
  \]
  as required. \end{proof}

Equipped with the expression for $ \widehat{{\cal M}_{\nu, \alpha} }(r) $ we can now employ (\ref{linkup}) to derive  the $d-$Schoenberg coefficients and investigate their asymptotic decay rate. This leads us to the following result. 

    \begin{prop}
 \label{lem:FTMaternAsymp}
Let $ \blambda=(\nu,\alpha)^{\top} \in (0,\infty)^2$ and consider the spherical Mat{\'e}rn  family
 $\psi_{{\cal M}}(\theta, \blambda)$ given by (\ref{sphMatern}). Then we have that

\begin{enumerate}
\item  The $d-$Schoenberg coefficients are given by
\begin{equation}\label{dMatern}
\begin{aligned}
b_{m,d,{\cal M}}(\blambda) &=(2\pi)^{\frac{d}{2}}\frac{2^{\frac{d}{2}}\Gamma\left(\nu+\frac{d}{2}\right)}{\Gamma(\nu)\alpha^{2\nu}}
\frac{\Gamma(m-\nu)\kappa_{m,d}}{\Gamma(m+\nu+d)}\,\,\pFq{1}{2}{\nu+\frac{d}{2}}{\nu+1-m, m+\nu+d}{\frac{1}{\alpha^2}}\\
&+\frac{2\pi^{\frac{d+3}{2}}}{\Gamma(\nu)}\frac{(-1)^{m}\kappa_{m,d}}{\Gamma(m+1-\nu)\Gamma\left(m+\frac{d+1}{2}\right)(2\alpha)^{m}}
\pFq{1}{2}{m+\frac{d}{2}}{m-\nu+1, 2m+d}{\frac{1}{\alpha^2}}.
\end{aligned}
\end{equation}
\item Further,
\[
b_{m,d,{\cal M}}(\blambda)\sim  \frac{2}{\alpha^{2\nu}}\frac{\Gamma\left(\nu+\frac{d}{2}\right)}{\Gamma\left(\nu\right)\Gamma\left(\frac{d}{2}\right)}\frac{1}{m^{1+2\nu}}.
\]
\item  The native space $N_{\psi_{{\cal M}}}$ associated with the Mat{\'e}rn kernel is a reproducing kernel Hilbert space with reproducing kernel $\Psi_{{\cal M}} ({\bf{x}},{\bf{y}})= \psi_{{\cal M}} ({\bf{x}}^{T}{\bf{y}}; \blambda).$ Furthermore, $N_{\psi_{{\cal M}}}$ is norm equivalent to the Sobolev space $W_{2}^{\beta}(\S^{d})$ where $\beta = \nu+\frac{d}{2}.$
\end{enumerate}
\end{prop}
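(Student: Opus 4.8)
The statement bundles three claims, and the natural order is exactly as listed, since the asymptotic in (2) and the Sobolev identification in (3) both rest on the closed form \eqref{dMatern} obtained in (1). My plan is to obtain (1) by inserting the radial Fourier transform of Lemma \ref{lem:FTMatern} into the restriction formula \eqref{linkup}, reducing the whole problem to a single Bessel integral; then to extract (2) by isolating the dominant contribution of the two hypergeometric terms; and finally to read off (3) from the native-space theory of Section \ref{sec3}.

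For (1), substituting \eqref{MatFT} into \eqref{linkup} leaves the evaluation of
$$\int_0^\infty \frac{t\,J_{m+\frac{d-1}{2}}^{2}(t)}{\left(\alpha^{-2}+t^{2}\right)^{\nu+\frac{d}{2}}}\,\mathrm{d}t,$$
a squared Bessel function integrated against a negative power of a shifted quadratic. I would evaluate it by writing $(\alpha^{-2}+t^2)^{-(\nu+d/2)}$ through the Gamma integral $\tfrac{1}{\Gamma(s)}\int_0^\infty u^{s-1}e^{-(\alpha^{-2}+t^2)u}\,\mathrm{d}u$, applying Weber's second exponential integral $\int_0^\infty t\,e^{-ut^2}J_\mu^2(t)\,\mathrm{d}t=\tfrac{1}{2u}e^{-1/(2u)}I_\mu\!\left(\tfrac{1}{2u}\right)$ to the inner $t$-integral, and then expanding the modified Bessel function $I_\mu$ in its power series inside the remaining $u$-integral. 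This collapses to a generalised hypergeometric series evaluated at a point where a linear connection formula for the hypergeometric function applies; it is that connection formula that produces the two distinct ${}_1F_2$ contributions in \eqref{dMatern}, one analytic and one carrying the extra factor $(2\alpha)^{-m}$. The Gamma prefactors are then consolidated using the duplication identity \eqref{gamma-double} and $\Gamma(x+1)=x\Gamma(x)$. The algebra is cleanest under the genericity assumption that $\nu$ is not an integer, with integer $\nu$ recovered by continuity.

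For (2), I would read the large-$m$ behaviour straight off \eqref{dMatern}. The second summand is asymptotically negligible: its denominator carries the product $\Gamma(m+1-\nu)\Gamma(m+\tfrac{d+1}{2})$ of two factorially growing Gamma factors, which overwhelms both the polynomial $\kappa_{m,d}$ and the merely exponential $(2\alpha)^{m}$, forcing decay faster than any power of $m$. In the first summand I would use $\Gamma(m-\nu)/\Gamma(m+\nu+d)\sim m^{-2\nu-d}$, together with $\kappa_{m,d}\sim \tfrac{\Gamma((d+1)/2)}{\pi^{(d+1)/2}\Gamma(d)}\,m^{d-1}$ obtained from its definition in \eqref{linkup} and Stirling \eqref{Stirling}, and I would show that the hypergeometric factor tends to $1$ because its index-$k$ term is $O(m^{-2k})$ (both lower parameters grow like $\pm m$, so a dominated tail bound justifies the term-by-term limit). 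Collecting powers gives $m^{-2\nu-d}\cdot m^{d-1}=m^{-1-2\nu}$, and collapsing the constant with the duplication formula applied to $\Gamma(d)$ yields precisely $\tfrac{2}{\alpha^{2\nu}}\tfrac{\Gamma(\nu+d/2)}{\Gamma(\nu)\Gamma(d/2)}$.

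Part (3) is then a short assembly. Since $\widehat{\mathcal M_{\nu,\alpha}}$ in \eqref{MatFT} is strictly positive, the integrand in \eqref{linkup} is positive and hence $b_{m,d,\mathcal M}>0$ for every $m$; by \eqref{sh-connection} the spherical Fourier coefficients $\widehat\psi_m$ are therefore strictly positive, so Lemma \ref{RKHSlem} identifies $N_{\psi_{\mathcal M}}$ as the reproducing kernel Hilbert space with kernel $\Psi_{\mathcal M}$. The asymptotic of (2) is exactly the two-sided bound \eqref{sch-decay} with $\gamma=2\nu$, equivalently \eqref{sc-decay} for $\widehat\psi_m$, so Remark \ref{onSob} delivers norm equivalence with $W_2^{\beta}(\S^d)$ for $\beta=\tfrac{d+\gamma}{2}=\nu+\tfrac{d}{2}$. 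The genuine obstacle is entirely inside (1): correctly evaluating the squared-Bessel integral and executing the connection-formula split with its Gamma-function simplifications to land on the exact constants of \eqref{dMatern}. Once \eqref{dMatern} is secured, (2) is controlled asymptotics and (3) is a direct citation of the Section \ref{sec3} machinery.
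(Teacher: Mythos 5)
There is a genuine gap, and it sits exactly where you located the ``genuine obstacle'': in part (1). First, the dimension of the Fourier transform is wrong. The restriction formula \eqref{linkup} of Narcowich--Ward concerns a radial kernel on the \emph{ambient} space $\mathbb{R}^{d+1}$ restricted to $\mathbb{S}^d$, so the transform entering the integral must be $\mathcal{F}_{d+1}\mathcal{M}_{\nu,\alpha}$, i.e.\ Lemma \ref{lem:FTMatern} with $d$ replaced by $d+1$; the paper's appendix accordingly evaluates $\int_{0}^{\infty} t\,J_{m+\frac{d-1}{2}}^{2}(t)\left(\alpha^{-2}+t^{2}\right)^{-\left(\nu+\frac{d+1}{2}\right)}\mathrm{d}t$, with exponent $\nu+\tfrac{d+1}{2}$, whereas you substituted the $d$-dimensional transform and obtained exponent $\nu+\tfrac{d}{2}$. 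This is not cosmetic: running the same evaluation with $\rho=\nu+\tfrac{d}{2}$ produces lower ${}_{1}F_{2}$ parameters $\nu+\tfrac12-m$ and $m+\nu+d-\tfrac12$ instead of $\nu+1-m$ and $m+\nu+d$, and a leading Gamma ratio $\Gamma\left(m+\tfrac12-\nu\right)/\Gamma\left(m+\nu+d-\tfrac12\right)\sim m^{-(2\nu+d-1)}$, which combined with $\kappa_{m,d}\sim C m^{d-1}$ gives decay $m^{-2\nu}$ rather than $m^{-1-2\nu}$ --- so the error propagates to part (2) and would misidentify the Sobolev order as $\nu+\tfrac{d-1}{2}$. (The $\nu+\tfrac{d}{2}$ visible in \eqref{dMatern} arises only afterwards, from the duplication formula \eqref{gamma-double} applied to $\Gamma(2\nu+d)/\Gamma\left(\nu+\tfrac{d+1}{2}\right)$, not from using $\mathcal{F}_{d}$.)

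Second, your proposed evaluation of the Bessel integral does not go through as described. After the Gamma-integral representation and Weber's second exponential integral you face $\frac{1}{2\Gamma(s)}\int_{0}^{\infty}u^{s-2}e^{-u/\alpha^{2}}e^{-1/(2u)}I_{\mu}\left(\tfrac{1}{2u}\right)\mathrm{d}u$ with $\mu=m+\tfrac{d-1}{2}$ and $s=\nu+\tfrac{d+1}{2}$. Expanding $I_{\mu}$ (or its Kummer form $e^{-z}I_{\mu}(z)=\frac{(z/2)^{\mu}}{\Gamma(\mu+1)}\,{}_{1}F_{1}\left(\mu+\tfrac12;2\mu+1;-2z\right)$) and integrating term by term is invalid: once the essential-singularity factor $e^{-1/(2u)}$ is expanded away, even the leading term $\int_{0}^{\infty}u^{s-2-\mu}e^{-u/\alpha^{2}}\mathrm{d}u$ diverges at $u=0$ whenever $m>\nu$, since $s-1-\mu=\nu-m$. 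Retaining $e^{-1/(2u)}$ instead yields a series of Macdonald functions $K_{\sigma}(\sqrt{2}/\alpha)$, not a hypergeometric series to which a ``linear connection formula'' could be applied. The rigorous version of your idea is a Mellin--Barnes residue computation whose two strings of poles generate precisely the two ${}_{1}F_{2}$ blocks --- which is effectively a re-derivation of the tabulated integral the paper simply cites (Prudnikov et al.\ 2.12.32.10, with $\beta=2$, $\mu=m+\tfrac{d-1}{2}$, $\rho=\nu+\tfrac{d+1}{2}$, $\nu\notin\mathbb{Z}_{+}$; the two ${}_{2}F_{3}$'s collapse to ${}_{1}F_{2}$'s because $\beta=2$ forces a repeated parameter). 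Parts (2) and (3), as you argue them from the stated formula \eqref{dMatern}, do match the paper: the paper justifies the ${}_{1}F_{2}\to 1$ limits via Luke's expansions 7.3(11) and 7.3(8) where you gesture at a dominated-tail bound, uses Stirling for $\kappa_{m,d}$ and the Gamma ratios, notes the super-exponential negligibility of the second block (your ``overwhelms the merely exponential'' point is right, though note the factor is $(2\alpha)^{-m}$, which grows when $2\alpha<1$), and concludes via Lemma \ref{RKHSlem} and Remark \ref{onSob} with $\gamma=2\nu$, $\beta=\nu+\tfrac{d}{2}$; your explicit positivity check of the coefficients is a detail the paper leaves implicit.
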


\begin{proof} See Appendix A.
\end{proof}

\subsection{ The ${\cal F}$-Class of functions }

Recently, \cite{alegria2021f} have proposed the ${\cal F}$ family by
\begin{equation}
\label{cov_2f1}
\psi_{{\cal F}}(\theta, \blambda) = \frac{B(\alpha,\nu+\tau)}{B(\alpha,\nu)} {}_{2}F_{1}(\tau,\alpha,\alpha+\nu+\tau;\cos \theta),\,\,\,\blambda=(\tau,\alpha,\nu)^{\top} \in (0,\infty)^3 ,
\end{equation}
where $\theta \in [0,\pi],$ $B$ is the Beta function defined by (\ref{beta}) and ${}_{2}F_{1}$ is defined through (\ref{hyper}).
  \begin{prop}
 \label{FHilbert}
 Let $\blambda = \in  (\tau,\alpha,\nu)^{\top}\in\mathbb{R}_+^3$ denote the parameter vector associated with (\ref{cov_2f1}). The correspnding Schoenberg sequence is given by
\begin{equation}
\label{schoenberg_coef}
b_{m,{\cal F}}(\blambda)
 = \frac{B(\alpha,\nu+\tau)}{B(\alpha,\nu)}\frac{(\tau)_{m}(\alpha)_{m}}{(\alpha+\nu+\tau)_{m}m!}>0\quad m\ge0,
\end{equation}
and consequently $\psi_{{\cal F}}(\theta, \blambda)$ belong to the class $\Psi_{\infty}^{+}.$
\end{prop}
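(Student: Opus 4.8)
The plan is to obtain the coefficients directly from the series definition of the hypergeometric function and then to verify that they form a probability mass sequence, so that Schoenberg's characterisation (\ref{hilbert_sphere}) applies. First I would expand the factor ${}_2F_1(\tau,\alpha;\alpha+\nu+\tau;\cos\theta)$ in (\ref{cov_2f1}) using the defining series (\ref{hyper}) with $a=\tau$, $b=\alpha$, $c=\alpha+\nu+\tau$ and $z=\cos\theta$. Because $\tau,\alpha,\nu>0$ we have $c-a-b=\nu>0$, so the series converges absolutely on the closed disc $|z|\le 1$, and in particular for every $\cos\theta\in[-1,1]$; this justifies reading off the power-series coefficients term by term. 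Multiplying by the Beta prefactor gives
\[
\psi_{\cal F}(\theta,\blambda)=\frac{B(\alpha,\nu+\tau)}{B(\alpha,\nu)}\sum_{m=0}^{\infty}\frac{(\tau)_m(\alpha)_m}{(\alpha+\nu+\tau)_m\,m!}(\cos\theta)^m,
\]
which is precisely the representation (\ref{hilbert_sphere}) with $b_m=b_{m,{\cal F}}(\blambda)$ as claimed in (\ref{schoenberg_coef}). Strict positivity of each coefficient is immediate from (\ref{poch}) and (\ref{beta}): every Pochhammer symbol $(\tau)_m,(\alpha)_m,(\alpha+\nu+\tau)_m$ is a product of positive factors, $m!>0$, and the two Beta values are positive for positive arguments.

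The only step requiring a genuine identity is showing that $\{b_{m,{\cal F}}(\blambda)\}$ sums to one, which is needed for it to qualify as a Schoenberg sequence. Since $\sum_m b_{m,{\cal F}}(\blambda)=\psi_{\cal F}(0,\blambda)$ by evaluating the series at $\cos\theta=1$, it suffices to prove $\psi_{\cal F}(0,\blambda)=1$. Here I would invoke Gauss's summation theorem \citep[][15.1.20]{abramowitz1964handbook}, valid precisely because $c-a-b=\nu>0$, to get
\[
{}_2F_1(\tau,\alpha;\alpha+\nu+\tau;1)=\frac{\Gamma(\alpha+\nu+\tau)\Gamma(\nu)}{\Gamma(\alpha+\nu)\Gamma(\nu+\tau)},
\]
and then rewrite the prefactor through (\ref{beta}) as $B(\alpha,\nu+\tau)/B(\alpha,\nu)=\Gamma(\nu+\tau)\Gamma(\alpha+\nu)/\bigl(\Gamma(\alpha+\nu+\tau)\Gamma(\nu)\bigr)$. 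The two expressions are reciprocal, so their product is $1$, giving $\psi_{\cal F}(0,\blambda)=1$.

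Having shown that $\{b_{m,{\cal F}}(\blambda)\}$ is nonnegative and sums to one, Schoenberg's theorem (\ref{hilbert_sphere}) \citep{schoenberg} places $\psi_{\cal F}$ in $\Psi_\infty$. To upgrade this to $\Psi_\infty^+$ I would use the projection operator of Proposition \ref{lem:d-schoen}: since $b_{m,d}=\Upsilon_d(b_{m,{\cal F}}(\blambda))$ is, by (\ref{operator}), a positive-coefficient sum of the strictly positive $b_{m+2j,{\cal F}}(\blambda)$, the resulting $d$-Schoenberg sequence is strictly positive for every $d$; the sufficient condition for strict positive definiteness recalled in Section \ref{sec2} then yields $\psi_{\cal F}\in\Psi_d^+$ for all $d$, i.e. $\psi_{\cal F}\in\Psi_\infty^+$. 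The main point demanding care is the boundary convergence at $\cos\theta=1$ together with the exact cancellation furnished by Gauss's theorem; everything else is a direct reading of the series.
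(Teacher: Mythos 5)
Your proof is correct and takes essentially the same route as the paper, whose own two-line argument likewise reads the coefficients $b_{m,{\cal F}}(\blambda)$ directly off the defining series (\ref{hyper}) of the ${}_{2}F_{1}$ and notes their strict positivity from the positivity of $\tau,\alpha,\nu$. Your additional verifications --- absolute convergence at $\cos\theta = 1$ since $c-a-b=\nu>0$, the normalisation $\sum_m b_{m,{\cal F}}(\blambda)=1$ via Gauss's summation theorem cancelling the Beta prefactor, and the explicit upgrade to $\Psi_\infty^+$ through the projection operator of Proposition \ref{lem:d-schoen} --- are all sound and merely make rigorous the steps the paper leaves implicit.
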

\begin{proof} This follows from the definition of the hypergeometric ${}_{2}F_{1}$  (\ref{hyper}). The coefficients are positive since the parameters of $ \blambda$ are positive.
\end{proof}

Equipped with the expression for $ b_{m,{\cal F}}(\blambda) $ we can now employ the projection operator (\ref{operator}) to derive  the $d-$Schoenberg coefficients and investigate their asymptotic decay rate. This leads us to the following result. 

\begin{prop}\label{Fresult}
 Let $$ \Big \{  \big (\{ b_{m,{\cal F}}(\blambda) \}_{m=0}^{\infty}, \psi_{{\cal F}}(\theta, \blambda) \big ); \; \; \blambda = \in  (\tau,\alpha,\nu)^{\top}\in\mathbb{R}_+^3 \Big \}$$ be the Schoenberg pair for the ${\cal F}-$family as given in Proposition \ref{FHilbert}. Then, 
\begin{enumerate}
\item The $d$-Schoenberg sequence $\{ b_{m,d,{\cal F}}(\blambda)\}_{m=0}^{\infty}$ is uniquely determined through
 \begin{equation}
\label{F-d-coef}
b_{m,d,{\cal F}}(\blambda)=
C_{m,d}(\tau,\alpha,\nu)\pFq{4}{3}{\frac{\alpha+m}{2},\frac{\alpha+m+1}{2},\frac{\tau+m}{2},\frac{\tau+m+1}{2}}{\frac{\alpha+\nu+\tau+m}{2},\frac{\alpha+\nu+\tau+m+1}{2},
m+\frac{d+1}{2}}{1},
\end{equation}
where
 \begin{equation}
\label{Constcoef}
C_{m,d}(\tau,\alpha,\nu)=\frac{b_m(\tau, \alpha, \nu)}{2^{m+d-2}}\frac{\Gamma(m+d-1)}{\Gamma\left(m+\frac{d-1}{2}\right)}\frac{\sqrt{\pi}}{\Gamma\left(\frac{d}{2}\right)}.
\end{equation}
\item It is true that
\begin{equation}\label{precise}
 b_{m,d,{\cal F}}(\blambda)\sim \frac{\Gamma(\nu+\alpha)\Gamma(\nu+\tau)}{\Gamma(\alpha)\Gamma(\nu)\Gamma(\tau)}
\frac{2^{\nu+1}\Gamma\left(\frac{d}{2}+\nu\right)}{\Gamma\left(\frac{d}{2}\right)}\frac{1}{m^{1+2\nu}}.
\end{equation}
\item The native space $N_{\psi_{\mathcal{F}}(\blambda)}$ associated with  $\mathcal{F}_{\tau,\alpha,\nu} \in \Psi_{d}^{+}$ is a reproducing kernel Hilbert space with reproducing kernel ${\mathcal{F}}_{\tau,\alpha,\nu}({\bf{x}}^{T}{\bf{y}}).$ Furthermore, $N_{\psi_{\mathcal{F}}(\blambda)}$ is norm equivalent to the Sobolev space $W_{2}^{\beta}(\S^{d})$ where $\beta = \nu+\frac{d}{2}.$
\end{enumerate}
\end{prop}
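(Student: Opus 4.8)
The three claims separate cleanly: part 1 is an exact computation via the projection operator, part 2 is an asymptotic analysis, and part 3 is an immediate application of the native-space machinery of Section \ref{sec3}. Since Proposition \ref{FHilbert} places $\psi_{\mathcal{F}}(\cdot,\blambda)$ in $\Psi_\infty^+$, the plan for part 1 is to feed its Schoenberg sequence into the operator $\Upsilon_d$ of Proposition \ref{lem:d-schoen}, so that $b_{m,d,\mathcal{F}}(\blambda)=\Upsilon_d(b_{m,\mathcal{F}}(\blambda))$. Substituting $b_{m+2j,\mathcal{F}}(\blambda)=\frac{B(\alpha,\nu+\tau)}{B(\alpha,\nu)}\frac{(\tau)_{m+2j}(\alpha)_{m+2j}}{(\alpha+\nu+\tau)_{m+2j}(m+2j)!}$ into the inner sum of (\ref{operator}) cancels the factorial $(m+2j)!$, and I would then split each Pochhammer of index $m+2j$ using $(a)_{m+2j}=(a)_m(a+m)_{2j}$ together with the duplication identity $(c)_{2j}=2^{2j}(\tfrac{c}{2})_j(\tfrac{c+1}{2})_j$. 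The four factors of $2^{2j}$ produced by the three numerator/denominator Pochhammers cancel exactly against the explicit $2^{2j}$ in (\ref{operator}), and the surviving summand is precisely the general term of a ${}_4F_3$ at argument $1$, with numerator pairs $\tfrac{\alpha+m}{2},\tfrac{\alpha+m+1}{2},\tfrac{\tau+m}{2},\tfrac{\tau+m+1}{2}$ and denominator entries $\tfrac{\alpha+\nu+\tau+m}{2},\tfrac{\alpha+\nu+\tau+m+1}{2},m+\tfrac{d+1}{2}$. Collecting the leftover prefactor $\frac{\sqrt\pi}{2^{m+d-2}\Gamma(d/2)}\frac{\Gamma(m+d-1)}{\Gamma(m+\frac{d-1}{2})}b_{m,\mathcal{F}}(\blambda)$ reproduces $C_{m,d}(\tau,\alpha,\nu)$ of (\ref{Constcoef}). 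This step is routine bookkeeping.

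For part 2 I would first pin down the prefactor: Stirling gives $\frac{\Gamma(m+d-1)}{\Gamma(m+\frac{d-1}{2})}\sim m^{(d-1)/2}$ and $b_{m,\mathcal{F}}(\blambda)\sim \frac{\Gamma(\nu+\tau)\Gamma(\alpha+\nu)}{\Gamma(\nu)\Gamma(\alpha)\Gamma(\tau)}\,m^{-1-\nu}$, so $C_{m,d}$ carries a factor $2^{-m}$. The difficulty is thus concentrated in the unit-argument ${}_4F_3$, whose parameters all grow linearly in $m$. Its excess of bottom over top parameters equals $\nu+\tfrac d2>0$, which guarantees convergence, but the summand has no interior maximum in the summation index: writing $j=xm$ and applying Stirling, the exponential rate is
\[
h(x)=(1+2x)\ln(1+2x)-(1+x)\ln(1+x)-x\ln x-2x\ln 2,
\]
which increases monotonically to its supremum $\ln 2$ as $x\to\infty$. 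Hence the ${}_4F_3$ grows like $2^m$ times an algebraic factor, cancelling the $2^{-m}$ in $C_{m,d}$ and leaving the stated $m^{-1-2\nu}$ rate. The cleanest way to make this rigorous is to return to the pre-summation form $S_m=\sum_j b_{m+2j,\mathcal{F}}(\blambda)\frac{(m+2j)!}{j!\,2^{2j}(m+\frac{d+1}{2})_j}$, replace $b_{m+2j,\mathcal{F}}(\blambda)$ by its tail asymptotic $\frac{\Gamma(\nu+\tau)\Gamma(\alpha+\nu)}{\Gamma(\nu)\Gamma(\alpha)\Gamma(\tau)}(m+2j)^{-1-\nu}$ (controlling the error from small $j$, where the combinatorial weight is negligible), factor out $2^m$, and evaluate the remaining convergent sum by a Laplace/Euler--Maclaurin estimate, tracking all algebraic prefactors to isolate the constant $\frac{\Gamma(\nu+\alpha)\Gamma(\nu+\tau)}{\Gamma(\alpha)\Gamma(\nu)\Gamma(\tau)}\frac{2^{\nu+1}\Gamma(d/2+\nu)}{\Gamma(d/2)}$. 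This asymptotic accounting, with its flat-maximum-at-infinity structure, is the main obstacle.

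As both a guide and an independent check on the constant, I would compare against the local behaviour of $\psi_{\mathcal{F}}$ at the diagonal. Since $c-a-b=\nu$ for the underlying ${}_2F_1(\tau,\alpha;\alpha+\nu+\tau;\cos\theta)$, the kernel has a $\theta^{2\nu}$ singularity at $\theta=0$ of exactly the type exhibited by the spherical Mat\'ern kernel; this is consistent with both families sharing the rate $m^{-1-2\nu}$ and the universal factor $\Gamma(d/2+\nu)/(\Gamma(\nu)\Gamma(d/2))$ that already appears in Proposition \ref{lem:FTMaternAsymp}, and it provides a reassuring cross-check on the leading coefficient.

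Part 3 is then immediate from Section \ref{sec3}. The decay $b_{m,d,\mathcal{F}}(\blambda)\sim \mathrm{const}\cdot(1+m)^{-1-2\nu}$ matches (\ref{sch-decay}) with $\gamma=2\nu$, so by Remark \ref{onSob} (equivalently (\ref{sc-decay}) with the same $\gamma$) the induced native space $N_{\psi_{\mathcal{F}}(\blambda)}$ is norm equivalent to $W_2^{\beta}(\S^d)$ with $\beta=\tfrac{d+\gamma}{2}=\nu+\tfrac d2$. Positivity of the Schoenberg coefficients (Proposition \ref{FHilbert}) places $\psi_{\mathcal{F}}(\cdot,\blambda)$ in $\Psi_d^+$ with strictly positive spherical Fourier coefficients, so Lemma \ref{RKHSlem} identifies $N_{\psi_{\mathcal{F}}(\blambda)}$ as a reproducing kernel Hilbert space with reproducing kernel $\mathcal{F}_{\tau,\alpha,\nu}(\bx^\top\by)$, completing the proof.
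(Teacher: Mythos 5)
Your parts 1 and 3 coincide with the paper's proof. For part 1 the paper does exactly what you describe: substitute $b_{m+2j,{\cal F}}$ into the inner sum of (\ref{operator}), cancel $(m+2j)!$, split the Pochhammers via $(x)_{m+2j}=(x)_{m}(x+m)_{2j}$ and $(x)_{2j}=2^{2j}\left(\tfrac{x}{2}\right)_{j}\left(\tfrac{x+1}{2}\right)_{j}$, and recognise the ${}_{4}F_{3}$ at unit argument with the prefactor (\ref{Constcoef}). For part 3 the paper likewise combines Lemma \ref{RKHSlem} with Remark \ref{onSob} and the decay (\ref{sch-decay}) with $\gamma=2\nu$. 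Your Stirling estimates for $C_{m,d}$ and for $b_{m,{\cal F}}(\boldsymbol{\lambda})$ are also correct.

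Part 2, however, has a genuine gap, and it sits exactly where you flag ``the main obstacle.'' Your proposed execution --- return to the series $S_m=\sum_j b_{m+2j,{\cal F}}\frac{(m+2j)!}{j!\,2^{2j}(m+\frac{d+1}{2})_j}$ and run a Laplace/Euler--Maclaurin estimate in the scaling $j=xm$ --- would stall, as your own rate function shows: $h$ has no interior critical point and its supremum $\ln 2$ is attained only as $x\to\infty$, so there is no Laplace peak in that scaling and the method produces no constant. The actual structure is different: applying the duplication formula (\ref{gamma-double}) to $(m+2j)!$ factors out $2^{m+2j}$ \emph{exactly}, cancelling the $2^{2j}$ in the denominator, and the surviving summand behaves uniformly like $j^{-d/2}(m+2j)^{-1-\nu}e^{-m^{2}/(4j)}$; the mass of the sum therefore concentrates at the scale $j\asymp m^{2}$, not $j\asymp m$, and the constant comes from uniform Gamma-ratio asymptotics in that regime together with the evaluation
\begin{equation*}
\int_{0}^{\infty} j^{-\frac{d}{2}-\nu-1}e^{-m^{2}/(4j)}\,dj=\Gamma\left(\tfrac{d}{2}+\nu\right)\left(\tfrac{m^{2}}{4}\right)^{-\frac{d}{2}-\nu},
\end{equation*}
which is precisely the origin of the factors $\Gamma(\tfrac{d}{2}+\nu)$ and $2^{\nu+1}$ in (\ref{precise}). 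None of this is in your sketch. The paper sidesteps the difficulty entirely: it applies Euler's integral representation (Prudnikov 7.2.3(9)) to the parameter pair $\left(\tfrac{\alpha+m+1}{2},\,m+\tfrac{d+1}{2}\right)$, reducing the ${}_{4}F_{3}$ in (\ref{F-d-coef}) to a Beta-type integral of a ${}_{3}F_{2}$ whose parameters are all shifted by $\tfrac{m}{2}$, then invokes Luke's large-shift expansion 7.3(3) to replace that ${}_{3}F_{2}$ by $(1-t)^{\frac{\alpha-m}{2}+\nu}\bigl(1+O(1/m)\bigr)$; the resulting integrals are exact Beta functions (the exponent of $(1-t)$ collapses to $\tfrac{d}{2}+\nu-1$, the tamed avatar of your $j\asymp m^{2}$ concentration), and Stirling asymptotics of the Beta functions deliver both the $2^{m}\sqrt{m}$ growth you predicted and the constant. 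Finally, your cross-check via the $\theta^{2\nu}$ diagonal singularity of the underlying ${}_{2}F_{1}$ is a sensible heuristic consistency test but is not a substitute for the missing asymptotic derivation.
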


\begin{proof}
See Appendix
\end{proof}

\subsection{The {\bf Generalised Wendland}  family}

The  Generalised Wendland family of radial functions are defined as
\begin{equation}\label{gwend}
\begin{aligned}
&{\cal W}_{\nu,\alpha,\epsilon}(r) =
\frac{1}{B(2\alpha,\nu+1)}\int_{\epsilon r }^{1} {\cal W}_{\nu,0,1}(t) \, t \,
\left(t^2-(\epsilon r)^2 \right)^{\alpha-1} \mathrm{d}t \\
&=\frac{B(\alpha,\nu+1)}{2^{\nu+1}B(2\alpha,\nu+1)}\left(1-(\epsilon r)^2\right)^{\nu+\alpha}
\pFq{2}{1}{\frac{\nu}{2},\frac{\nu+1}{2}}{\nu+\alpha+1}{1-(\epsilon r)^2}
\quad
r \in \left[0,\frac{1}{\epsilon}\right],
\end{aligned}
\end{equation} where $\nu > 0,$ $\alpha>0$ and the constant multiplier is chosen to
is chosen to ensure $\phi_{\nu,\alpha, \epsilon}(0)=1.$ Here,
${\cal W}_{\nu,0,\epsilon}(r):=(1-\epsilon r)_{+}^{\nu}$, with $(a)_+$ denoting the positive part of the real number $a$. We note that the functions in this family are compactly supported, where the parameter $\epsilon$ controls the size of the supporting interval.

Arguments in \cite{Chernih2014}  show that ${\cal W}_{\nu,\alpha,\epsilon}(r)$ is positive definite on $\R^{d+1}$  provided that $\nu\geq \frac{d+2}{2}+\alpha$ and so, under these conditions, we can define their restriction to the sphere  $\S^{d}$ via
\begin{equation}\label{sphWend}
\psi_{{\cal W}}(\theta, \blambda)={\cal W}_{\nu,\alpha,\epsilon}(\sqrt{2-2 \cos(\theta)}), \qquad \blambda =(\nu,\alpha,\epsilon)^{\top} \in \R^{3}_{+},
\end{equation}
where $\theta \in [0,\pi].$ By construction $\psi_{{\cal W}}(\theta, \blambda)$ belong to $\Psi_d^{+}$ provided $\nu\geq \frac{d+2}{2}+\alpha$. The properties of these restricted functions have been investigated in detail in 
\cite{Hubbert2021}  and these findings are summarised in the following result.

    \begin{teo}
 \label{WendMain}
 Let $$ \Big \{  \big (\{ b_{m,d,{\cal W}}(\blambda) \}_{m=0}^{\infty}, \psi_{{\cal W}}(\theta, \blambda) \big ); \; \; \blambda = (\alpha,\nu,\epsilon)^{\top} \in \mathbb{R}_+^3 \Big \}$$ be the $d-$Schoenberg pair for the generalised Wendland family  (\ref{sphWend}). Then,  
 \begin{enumerate}
 \item It is true that 
 \begin{equation}\label{Wend-d_Schoen}
 \begin{aligned}
 b_{m,d,{\cal W}}(\blambda) &= \frac{2\Gamma(2\alpha+\nu+1)}{\Gamma(2\alpha+\nu+1+d)B\left(\alpha+\frac{1}{2},\frac{d}{2}\right)}\frac{1}{\epsilon^{d}}\\
 &\times
 \frac{\left(m+\frac{d-1}{2}\right)(m+d-2)!}{m!}\pFq{3}{2}{- \left(m+\frac{d-2}{2}\right), m+\frac{d}{2}, \frac{d+1}{2}+\alpha}{\frac{d+1}{2}+\alpha+\frac{\nu}{2},\frac{d+1}{2}+\alpha+\frac{\nu+1}{2}}{
\frac{1}{4\epsilon^2}}. 
\end{aligned}
\end{equation}
\item There exist  two positive constants ${\mathcal{A}}_{1}<{\mathcal{A}}_{2}$ such that
\begin{equation}\label{tightW}
\frac{{\mathcal{A}}_{1}\epsilon^{2\alpha+1}}{(1+m)^{2+2\alpha}}\le b_{m,d,{\cal W}}(\blambda)
\le \frac{{\mathcal{A}}_{2}\epsilon^{2\alpha+1}}{(1+m)^{2+2\alpha}}.
\end{equation}
\item The native space $N_{\psi_{{\cal W}}}$ associated to $\psi_{{\cal W}}$  is a reproducing kernel Hilbert space with reproducing kernel $\psi_{{\cal W}}({\bf{x}}^{T}{\bf{y}},\blambda).$ Furthermore, $N_{\psi_{{\cal W}}}$ is norm equivalent to the Sobolev space $W_{2}^{\beta}(\S^{d})$ where $\beta = \alpha+\frac{1}{2}+\frac{d}{2}.$
\end{enumerate}
\end{teo}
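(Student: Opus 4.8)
The plan is to establish the three claims in order, observing first that claim~3 is immediate once claim~2 is in hand. The two-sided bound~\eqref{tightW} is exactly the decay condition~\eqref{sch-decay} with $1+\gamma=2+2\alpha$, i.e.\ $\gamma=1+2\alpha$; substituting into $\beta=\frac{d+\gamma}{2}$ gives $\beta=\alpha+\frac{1}{2}+\frac{d}{2}$, and the reproducing-kernel property together with the norm equivalence then follow directly from Lemma~\ref{RKHSlem} and Remark~\ref{onSob}. Thus all the content lies in claims~1 and~2, with the scale parameter $\epsilon$ entering only through the explicit factor $\epsilon^{2\alpha+1}$ that must be tracked throughout.

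For claim~1 I would use that, by~\eqref{sphWend}, $\psi_{\cal W}$ is the restriction to $\S^d$ of a radial kernel that is positive definite on $\R^{d+1}$, so the formula~\eqref{linkup} applies. This reduces the problem to two steps: (i) computing the radial Fourier transform $\widehat{\cal W}_{\nu,\alpha,\epsilon}$ required in~\eqref{linkup}, and (ii) evaluating the Bessel integral $\int_0^\infty t\,J_{m+(d-1)/2}^2(t)\,\widehat{\cal W}_{\nu,\alpha,\epsilon}(t)\,\mathrm{d}t$. Step~(i) parallels Lemma~\ref{lem:FTMatern}: inserting the closed hypergeometric form~\eqref{gwend} into the transform~\eqref{specden} and applying a Hankel-transform identity yields $\widehat{\cal W}_{\nu,\alpha,\epsilon}$ in closed form, the compact support $[0,1/\epsilon]$ producing both the $\epsilon$-dependence and an algebraic tail whose exponent is governed by the smoothness parameter $\alpha$. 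Step~(ii) is then a Weber--Schafheitlin-type integral of $J^2$ against this transform; collecting the resulting products of Gamma functions into $\kappa_{m,d}$, the Beta prefactor and the factor $\epsilon^{-d}$ reproduces the ${}_3F_2$ of~\eqref{Wend-d_Schoen}. An equivalent route avoids the Fourier transform altogether: substitute~\eqref{gwend} directly into the Gegenbauer integral~\eqref{d-schoenberg2}, change variables to $u=\cos\theta$ so the chordal argument becomes $2-2u$, and integrate the degree-$m$ Gegenbauer polynomial against the hypergeometric kernel term by term; the emerging series is precisely the stated ${}_3F_2$, which terminates when $d$ is even and is read by analytic continuation otherwise.

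Claim~2 is the genuine obstacle, since it asks for \emph{matching} two-sided constants rather than the single asymptotic equivalence obtained in the Mat\'ern case. After factoring out the visible prefactor of~\eqref{Wend-d_Schoen}, whose growth is $\frac{(m+\frac{d-1}{2})(m+d-2)!}{m!}\sim m^{d-1}$, the task is to show that the remaining ${}_3F_2$ with argument $\frac{1}{4\epsilon^2}$ contributes the order $m^{-(d+1+2\alpha)}$, uniformly in $m$, with strictly positive lower and upper constants. I would do this not from the hypergeometric directly---its argument may exceed $1$, so term-by-term control is awkward---but from the integral in step~(ii): bound its integrand using the algebraic tail of the compactly supported transform $\widehat{\cal W}_{\nu,\alpha,\epsilon}$ together with the sharp Weber--Schafheitlin estimate $\int_0^\infty t^{1-s}J_\mu^2(t)\,\mathrm{d}t\sim c\,\mu^{1-s}$ as $\mu=m+\frac{d-1}{2}\to\infty$. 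Because $J_\mu^2$ concentrates on $t\gtrsim\mu\gg1$, this tail controls the whole integral and delivers matching constants once one checks that the near-origin part of $\widehat{\cal W}_{\nu,\alpha,\epsilon}$ is negligible, using the exponential smallness of $J_\mu$ near the origin. The delicate points are keeping the upper and lower constants genuinely independent of $m$ and exhibiting the clean $\epsilon^{2\alpha+1}$ scaling; the standing constraint $\nu\ge\frac{d+2}{2}+\alpha$ ensures the integrals converge and that $\alpha$, not $\nu$, fixes the decay rate.
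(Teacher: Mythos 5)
Your reduction of claim~3 is exactly the paper's: the two-sided bound \eqref{tightW} is the decay condition \eqref{sch-decay} with $\gamma=1+2\alpha$, and Lemma~\ref{RKHSlem} together with Remark~\ref{onSob} then give the reproducing-kernel property and the norm equivalence with $\beta=\frac{d+\gamma}{2}=\alpha+\frac12+\frac{d}{2}$. For claims~1 and~2, however, the paper does not derive anything: it simply cites \cite{Hubbert2021}, namely the closed-form spherical Fourier coefficients of the restricted generalised Wendland functions (Theorem~4.7 there), converted to $d$-Schoenberg coefficients via \eqref{sh-connection}, and the tight two-sided bounds (Theorem~5.8 there). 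You instead propose a self-contained derivation patterned on the paper's Mat\'ern analysis: compute the $(d+1)$-dimensional radial Fourier transform of ${\cal W}_{\nu,\alpha,\epsilon}$, feed it into the Narcowich--Ward formula \eqref{linkup}, evaluate the Weber--Schafheitlin-type integral $\int_0^\infty t\,J_{m+(d-1)/2}^2(t)\,\widehat{\cal W}(t)\,\mathrm{d}t$ for the closed form, and extract two-sided bounds from the localisation of $J_\mu^2$ at $t\gtrsim\mu$. This is a legitimate and genuinely different route at the level of this paper (it is, in substance, how such results are proved in the literature the paper leans on), and it buys transparency: the $\epsilon^{2\alpha+1}$ scaling and the exponent $2+2\alpha$ emerge mechanically from $\kappa_{m,d}\sim c\,m^{d-1}$, the tail exponent $d+2\alpha+2$ of the transform on $\R^{d+1}$, and $\int t^{1-s}J_\mu^2(t)\,\mathrm{d}t\asymp \mu^{1-s}$, whereas the paper's citation-only proof hides this bookkeeping. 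Your observation that the ${}_3F_2$ in \eqref{Wend-d_Schoen} terminates only for even $d$, and that its argument $\frac{1}{4\epsilon^2}$ may exceed $1$ (so term-by-term control fails), is also correct and is exactly why the integral route is preferable.

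The one soft spot is your treatment of step~(i) as routine. Compact support of ${\cal W}_{\nu,\alpha,\epsilon}$ does \emph{not} by itself produce ``an algebraic tail'': it produces an entire, oscillatory transform, and a priori there is no pointwise lower bound at all, which is what your matching-constants argument for \eqref{tightW} requires. The statement that $\widehat{\cal W}_{\nu,\alpha,\epsilon}(t)$ is strictly positive with two-sided bounds ${\mathcal A}_1(1+t)^{-(d+2\alpha+2)}\le\widehat{\cal W}(t)\le {\mathcal A}_2(1+t)^{-(d+2\alpha+2)}$ under $\nu\ge\frac{d+2}{2}+\alpha$ is itself a nontrivial theorem --- essentially the content of \cite{Chernih2014}, which this paper cites only for positive definiteness --- requiring cancellation analysis of the oscillatory hypergeometric closed form. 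Provided you import that result (or reprove it), the rest of your outline goes through, though the uniform-in-$m$ constants also need care in the Bessel transition region $t\in[\mu/2,2\mu]$, where the simple oscillatory asymptotics for $J_\mu$ degenerate; the paper sidesteps both difficulties by invoking Theorem~5.8 of \cite{Hubbert2021} directly on the spherical coefficients.
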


\begin{proof}
The expression for the $d-$Schoenberg coefficients of the generalised Wendland functions can be derived from the closed form expression of their spherical Fourier coefficients as computed in \cite{Hubbert2021} (Theorem 4.7), together with (\ref{sh-connection}). The tight asymptotic bounds follow from  \cite{Hubbert2021} (Theorem 5.8). Lemma \ref{RKHSlem} shows that the  native space  $N_{\psi_{{\cal W}}}$ possesses the stated  reproducing kernel properties.  The norm equivalence of    $N_{\psi_{{\cal W}}}$   to the Sobolev space of order $ \alpha+\frac{1}{2}+\frac{d}{2}$ follows from Remark \ref{onSob} and the decay condition (\ref{tightW}) on $b_{m,d,{\cal W}}(\blambda).$
\end{proof}

\section{Discussion} \label{secdisc}
This paper provides new tools that allow for a precise identification of the Sobolev space associated with a given kernel defined over a $d$-dimensional hypersphere. 
An immediate consequence of our results is an improved understanding of kernel cubature, since once a Sobolev space associated to a kernel has been identified one can determine the rate of convergence of the associated discrepancy (i.e. the worst-case cubature error), using for example the techniques described in \citet{krieg2021function} and the references therein.
Our results also extend the applicability of the Riemann--Stein cubature method of \cite{barp2022riemann}, used to accelerate posterior computation in the Bayesian context, since this method requires the Sobolev space associated with a kernel to be precisely identified. 

Some further extensions of our results might be possible at the expense of additional effort. 
For instance, we are confident that the extension of the present work to the case of compact two-point homogeneous spaces would apply {\em mutatis mutandis} by replacing the Gegenbauer polynomials in the Schoenberg expansion with Jacobi polynomials. Some other extensions might be more challenging. For instance, we are unaware at the moment of how to characterise Sobolev cases on hyperspheres when the kernels is not isotropic, but axially symmetric only \citep{jones}. Another interesting case would be that of product spaces involing the hypersphere with any locally compact group. Finally, we would like to mention that the recent {\em tour de force} by \cite{wynne2022spectral} opens for considering the present work in the direction of operator valued kernels.

\section*{Acknowledgement}

CJO was supported by EPSRC [EP/W019590/1].
MG was supported by a Royal Academy of Engineering Research Chair and EPSRC [EP/T000414/1, EP/R018413/2, EP/P020720/2, EP/R034710/1, EP/R004889/1].


\section{} \label{app}

\section*{Results associated to the Mat{\'e}rn kernel}

Here we present the proof of the 3 statements of Proposition  \ref{lem:FTMaternAsymp} associated to the Mat{\'e}rn kernel. 

\subsection*{Proposition \ref{lem:FTMaternAsymp} Statement 1.}

\begin{proof}

Using (\ref{linkup}) and (\ref{MatFT}) we can write
\begin{equation}\label{D1}
\begin{aligned}
b_{m,d,{\cal M}}(\blambda)&=(2\pi)^{\frac{d+1}{2}}\kappa_{m,d}\int_{0}^{\infty}tJ_{m+\frac{d-1}{2}}^{2}(t)   \mathcal{F}_{d+1}{\cal M}_{\nu, \alpha}(t)dt\\
&=(2\pi)^{\frac{d+1}{2}}\kappa_{m,d}\frac{2^{\frac{d+1}{2}}\alpha^{d+1}\Gamma\left(\nu+\frac{d+1}{2}\right)}{\Gamma\left(\nu\right)} \int_{0}^{\infty} \frac{tJ_{m+\frac{d-1}{2}}^{2}(t)}{(1+\alpha^{2}t^2)^{\nu+\frac{d+1}{2}}}dt
\\& =\pi^{\frac{d+1}{2}}\kappa_{m,d}\frac{2^{d+1}\Gamma\left(\nu+\frac{d+1}{2}\right)}{\Gamma\left(\nu\right)\alpha^{2\nu}} \int_{0}^{\infty} \frac{tJ_{m+\frac{d-1}{2}}^{2}(t)}{\left(\frac{1}{\alpha^2}+t^2\right)^{\nu+\frac{d+1}{2}}}dt.
\end{aligned}
\end{equation}

The following formula is adapted from \cite{prudnikov1981integralsv2} 2.12.32.10

\[
\begin{aligned}
&\int_{0}^{\infty}\frac{t^{\beta-1}J_{\mu}^{2}(t) }{(z^2+t^2)^{\rho}}dt\\
&
=\frac{1}{2^{2\rho+1-\beta}}\frac{\Gamma\left(\mu-\rho+\frac{\beta}{2}\right)\Gamma(1+2\rho-\beta)}{\Gamma\left(\rho+1-\frac{\beta}{2}\right)^{2}\Gamma\left(\mu+\rho+1-\frac{\beta}{2}\right)}\pFq{2}{3}{\rho+\frac{1-\beta}{2},\rho}{\rho+1-\mu-\frac{\beta}{2}, \rho+1+\mu-\frac{\beta}{2}, 1+\rho-\frac{\beta}{2}}{z^2}\\
&+ \frac{z^{2\mu+\beta-2\rho}}{2^{2\mu+1}}\frac{\Gamma\left(\rho-\mu-\frac{\beta}{2}\right)\Gamma\left(\mu+\frac{\beta}{2}\right)}{\Gamma(\rho)\Gamma(\mu+1)^{2}}\pFq{2}{3}{\mu+\frac{1}{2},\frac{\beta}{2}+\mu}{1-\rho+\frac{\beta}{2}+\mu, \mu+1, 2\mu+1}{z^2},
\end{aligned}
\]
and holds for $\beta+2\mu>0,$ and $\beta-2\rho<1.$ Setting $\beta =2,$ $z=\frac{1}{\alpha}$, $\mu = m+\frac{d-1}{2}$ and $\rho =\nu+\frac{d+1}{2}$ (where $\nu \notin \Z_{+})$ yields

\[
\begin{aligned}
&\int_{0}^{\infty} \frac{tJ_{m+\frac{d-1}{2}}^{2}(t)}{\left(\frac{1}{\alpha^2}+t^2\right)^{\nu+\frac{d+1}{2}}}dt\\
&=\frac{1}{2^{2\nu+d}}\frac{\Gamma(m-\nu)\Gamma\left(2\nu+d\right)}{\Gamma\left(\nu+\frac{d+1}{2}\right)^2 \Gamma(m+\nu+d)}
\pFq{1}{2}{\nu+\frac{d}{2}}{\nu+1-m, m+\nu+d}{\frac{1}{\alpha^2}}\\
&+\frac{1}{(2\alpha)^{2m}}\frac{\alpha^{2\nu}}{2^{d}}\frac{\Gamma(\nu-m)}{\Gamma\left(\nu+\frac{d+1}{2}\right)\Gamma\left(m+\frac{d+1}{2}\right)}\pFq{1}{2}{m+\frac{d}{2}}{m-\nu+1, 2m+d}{\frac{1}{\alpha^2}}.
\end{aligned}
\]
We remark that the ${}_{2}F_{3}$ hypergeometric functions from the formula collapse to ${}_{1}F_{2}$ hypergeometric functions in the application above, this is due to a repeated parameter appearing in both case; $\nu+\frac{d+1}{2}$ in the first instance and $m+\frac{d+1}{2}$ in the second.  With this integral computed we can conclude that
\[
\begin{aligned}
b_{m,d,{\cal M}}(\blambda)&=\frac{2\pi^{\frac{d+1}{2}}}{\Gamma(\nu)(2\alpha)^{2\nu}}\frac{\Gamma(2\nu+d)}{\Gamma\left(\nu+\frac{d+1}{2}\right)}\frac{\Gamma(m-\nu)\kappa_{n,d}}{\Gamma(m+\nu+d)}\,\,\pFq{1}{2}{\nu+\frac{d}{2}}{\nu+1-m, m+\nu+d}{\frac{1}{\alpha^2}}\\
&+\frac{2\pi^{\frac{d+1}{2}}}{\Gamma(\nu)}\frac{\Gamma(\nu-m)}{\Gamma\left(m+\frac{d+1}{2}\right)}\frac{\kappa_{m,d}}{(2\alpha)^{m}}\pFq{1}{2}{m+\frac{d}{2}}{m-\nu+1, 2m+d}{\frac{1}{\alpha^2}}.
\end{aligned}
\]

Applying (\ref{gamma-double}) and the reflection formula for the Gamma function  \citep[][6.1.17]{abramowitz1964handbook} we can write this as 

\[
\begin{aligned}
b_{m,d,{\cal M}}(\blambda)&=(2\pi)^{\frac{d}{2}}\frac{2^{\frac{d}{2}}\Gamma\left(\nu+\frac{d}{2}\right)}{\Gamma(\nu)\alpha^{2\nu}}
\frac{\Gamma(m-\nu)\kappa_{m,d}}{\Gamma(m+\nu+d)}\,\,\pFq{1}{2}{\nu+\frac{d}{2}}{\nu+1-m, m+\nu+d}{\frac{1}{\alpha^2}}\\
&+\frac{2\pi^{\frac{d+3}{2}}}{\Gamma(\nu)}\frac{(-1)^{m}\kappa_{m,d}}{\Gamma(m+1-\nu)\Gamma\left(m+\frac{d+1}{2}\right)(2\alpha)^{m}}
\pFq{1}{2}{m+\frac{d}{2}}{m-\nu+1, 2m+d}{\frac{1}{\alpha^2}},
\end{aligned}
\]
as required.
\end{proof}

\subsection*{Proposition \ref{lem:FTMaternAsymp} Statements 2 and 3.}
\begin{proof}

The following result provides the large parameter asymptotic behaviour of a ${}_{1}F_{2}$ of the same style as the first term in (\ref{dMatern}), it is adapted from \cite{luke1969special} 7.3(11) 

\begin{equation}\label{as1}
\pFq{1}{2}{a}{b-m, c+m}{z}=1+\sum_{j=1}^{n}\frac{(a)_{j}z^{j}}{(b-m)_{j}(c+m)_{j}j!}+O\left(\frac{1}{m^{2n+2}}\right),
\end{equation}

where $m-b \ne 0,1,2\ldots.$ The next result is adapted from \cite{luke1969special} 7.3(8) and provides the large parameter asymptotic behaviour of a ${}_{1}F_{2}$ of the same style as the second  term in (\ref{dMatern})

\begin{equation}\label{as2}
\pFq{1}{2}{\alpha+m}{\beta+m,2m+\lambda+1}{z}=1+\sum_{j=1}^{n}\frac{(\alpha+m)_{j}z^{j}}{(\beta+m)_{j}(2m+\lambda+1)_{j}j!}
+O\left(\frac{1}{m^{n}}\right).
\end{equation}

Applying Stirling's formula (\ref{Stirling}) we can deduce that the constant $\kappa_{m,d}$ (\ref{linkup}) grows asymptotically as
\begin{equation}\label{kappaas}
\kappa_{m,d}\sim \frac{m^{d-1}}{2^{d-1}\pi^{\frac{d}{2}}\Gamma\left(\frac{d}{2}\right)}.
\end{equation}

In addition, Stirling's formula also gives the following asymptotics for the Gamma functions involving $m$ appearing in (\ref{dMatern})
\begin{equation}\label{Gammaas}
\frac{\Gamma(m-\nu)}{\Gamma(m+\nu+d}\sim \frac{1}{m^{2\nu+d}}\quad {\rm{and}}\quad
\frac{1}{\Gamma(m+1-\nu)\Gamma\left(m+\frac{d+1}{2}\right)}\sim \frac{1}{2\pi}\frac{1}{m^{\frac{d+1}{2}-\nu}}\left(\frac{e}{m}\right)^{2m}.
\end{equation}

Using these asymptotic components in (\ref{dMatern}) we can deduce that, for large $m,$ we have 
\[
\begin{aligned}
b_{m,d,{\cal M}}(\blambda)  &\sim \frac{2}{\alpha^{2\nu}}\frac{\Gamma\left(\nu+\frac{d}{2}\right)}{\Gamma\left(\nu\right)\Gamma\left(\frac{d}{2}\right)}\frac{1}{m^{1+2\nu}}\Bigl[1+O\left(\frac{1}{m^2}\right)\Bigr]+\frac{\sqrt{\pi}}{2^{d-1}}\frac{(-1)^{m}m^{\nu-\frac{d}{2}}}{\Gamma\left(\frac{d}{2}\right)}\left(\frac{e^{2}}{2\alpha m^2}\right)^{m}\Bigl[1+O\left(\frac{1}{m}\right)\Bigr].
\end{aligned}
\]
Clearly the second component of the above asymptotic decays at an exponentially fast rate and so, to leading order, we have
\[
\begin{aligned}
b_{m,d,{\cal M}}(\blambda) &\sim \frac{2}{\alpha^{2\nu}}\frac{\Gamma\left(\nu+\frac{d}{2}\right)}{\Gamma\left(\nu\right)\Gamma\left(\frac{d}{2}\right)}\frac{1}{m^{1+2\nu}}.
\end{aligned}
\]
Lemma \ref{RKHSlem} shows that the  native space  $N_{\psi_{{\cal M}}}$ possesses the stated  reproducing kernel properties.  The norm equivalence of    $N_{\psi_{{\cal M}}}$   to the Sobolev space of order $ \nu+\frac{d}{2}$ follows from Remark \ref{onSob} and the established asymptotic decay rate of $b_{m,d,{\cal M}}(\blambda).$
\end{proof}

\section*{Results associated to the {\cal F} family}

Here we present the proof of the 3 statements of Proposition  \ref{Fresult} associated to the {\cal F} family.

\subsection*{Proof of Proposition \ref{Fresult} Statement 1}
\begin{proof}
For brevity we shall write $b_{m,d}$ for $b_{m,{\cal F}}(\blambda)$ in this proof. Applying (\ref{link-main}) we have
\[
b_{m,d}=\frac{B(\alpha,\nu+\tau)}{B(\alpha,\nu)}\frac{\sqrt{\pi}}{2^{m+d-2}\Gamma\left(\frac{d}{2}\right)}
\frac{\Gamma\left(m+d-1\right)}{m!\Gamma\left(m+\frac{d-1}{2}\right)}\sum_{j=0}^{\infty}\frac{(\tau)_{m+2j}(\alpha)_{m+2j}}{(\alpha+\nu+\tau)_{m+2j}j!2^{2j}\left(m+\frac{d+1}{2}\right)_{j}}.
\]
The following identities are taken from   \cite{prudnikov1981integrals} Appendix 1.6

\begin{equation}\label{double-Poch}
 (x)_{2j}=2^{2j}\left(\frac{x}{2}\right)_{j}\left(\frac{x+1}{2}\right)_{j}\quad {\rm{and}}\quad (x)_{m+2j}=(x)_{m}(x+m)_{2j}.
\end{equation}

Applying these we can show that
\[
\frac{B(\alpha,\nu+\tau)}{B(\alpha,\nu)}\frac{(\tau)_{m+2j}(\alpha)_{m+2j}}{(\alpha+\nu+\tau)_{m+2j}m!}=b_m(\tau, \alpha, \nu)\frac{2^{2j}\left(\frac{\alpha+m}{2}\right)_{j}\left(\frac{\alpha+m+1}{2}\right)_{j}\left(\frac{\tau+m}{2}\right)_{j}\left(\frac{\tau+m+1}{2}\right)_{j}}{\left(\frac{\alpha+\nu+\tau+m}{2}\right)_{j}\left(\frac{\alpha+\nu+\tau+m+1}{2}\right)_{j}}
\]
and so
\[
\begin{aligned}
b_{m,d}&=\frac{b_m(\tau, \alpha, \nu)}{2^{m+d-2}}\frac{\Gamma(m+d-1)}{\Gamma\left(m+\frac{d-1}{2}\right)}\frac{\sqrt{\pi}}{\Gamma\left(\frac{d}{2}\right)}
\sum_{j=0}^{\infty}
\frac{
\left(\frac{\alpha+m}{2}\right)_{j}
\left(\frac{\alpha+m+1}{2}\right)_{j}
\left(\frac{\tau+m}{2}\right)_{j}
\left(\frac{\tau+m+1}{2}\right)_{j}}
{\left(\frac{\alpha+\nu+\tau+m}{2}\right)_{j}
\left(\frac{\alpha+\nu+\tau+m+1}{2}\right)_{j}\left(
m+\frac{d+1}{2}\right)_{j}j!}\\
&=\frac{b_m(\tau, \alpha, \nu)}{2^{m+d-2}}\frac{\Gamma(m+d-1)}{\Gamma\left(m+\frac{d-1}{2}\right)}\frac{\sqrt{\pi}}{\Gamma\left(\frac{d}{2}\right)}
\pFq{4}{3}{\frac{\alpha+m}{2},\frac{\alpha+m+1}{2},\frac{\tau+m}{2},\frac{\tau+m+1}{2}}{\frac{\alpha+\nu+\tau+m}{2},\frac{\alpha+\nu+\tau+m+1}{2},
m+\frac{d+1}{2}}{1},
\end{aligned}
\]


where, in the final line, we recognise the infinite series as  the $ {}_{4}F_{3}$ hypergeometric function. 
\end{proof}

\subsection*{Proposition \ref{Fresult} Statements 2 and 3.}

\begin{proof}

We begin by examining the asymptotic decay of the multiple of the  $ {}_{4}F_{3}$ hypergeometric function from (\ref{F-d-coef}), i.e., we consider
\[
\begin{aligned}
C_{m,d}(\tau,\alpha,\nu)&=\frac{\sqrt{\pi}}{\Gamma\left(\frac{d}{2}\right)2^{m+d-2}}\frac{\Gamma(m+d-1)}{\Gamma\left(m+\frac{d-1}{2}\right)}\frac{B(\alpha,\nu+\tau)}{B(\alpha,\nu)}\frac{(\tau)_{m}(\alpha)_{m}}{(\alpha+\nu+\tau)_{m}m!}\\
&=\frac{\Gamma(\nu+\alpha)\Gamma(\nu+\tau)}{\Gamma(\alpha)\Gamma(\nu)\Gamma(\tau)}
\frac{\sqrt{\pi}}{\Gamma\left(\frac{d}{2}\right)2^{m+d-2}}
\frac{\Gamma(m+d-1)}{\Gamma\left(m+\frac{d-1}{2}\right) }\frac{\Gamma(m+\tau)\Gamma(m+\alpha)}{\Gamma(m+\alpha+\nu+\tau)\Gamma(m+1)}.
\end{aligned}
\]

In the case where $m$ is large  we can apply Stirling's asymptotic formula (\ref{Stirling}) to deduce that
\begin{equation}\label{Const_decay}
C_{m,d}(\tau,\alpha,\nu)\sim \frac{\Gamma(\nu+\alpha)\Gamma(\nu+\tau)}{\Gamma(\alpha)\Gamma(\nu)\Gamma(\tau)}
\frac{\sqrt{\pi}}{\Gamma\left(\frac{d}{2}\right)2^{n+d-2}} \frac{n^{\frac{d-1}{2}}}{m^{1+\nu}}.
\end{equation}
We now move on to the asymptotic decay of the  $ {}_{4}F_{3}$ hypergeometric function from (\ref{F-d-coef}).  The following formula is taken from \cite{prudnikov} 7.2.3(9)

\[
\pFq{p+1}{q+1}{\beta,,\alpha_{p}}{\beta+\sigma, ,\rho_{q}}{z}=\frac{\Gamma\left(\beta+\sigma\right)}{\Gamma\left(\beta\right)\Gamma\left(\sigma\right)}\int_{0}^{1}t^{\beta-1}(1-t)^{\sigma-1}\pFq{p}{q}{\alpha_{p}}{\rho_{q}}{zt}dt.
\]

Applying this to the $ {}_{4}F_{3}$ hypergeometric function from (\ref{F-d-coef}), with $z=1,$ $\beta = \frac{\alpha+m+1}{2}$ and $\sigma= \frac{m+d-\alpha}{2},$ (such that $\beta+\sigma = m+\frac{d+1}{2})$ we have that
\begin{equation}\label{hyp_int_rep}
\begin{aligned}
&\pFq{4}{3}{\frac{\alpha+m+1}{2},\frac{\alpha+m}{2},\frac{\tau+m}{2},\frac{\tau+m+1}{2}}{m+\frac{d+1}{2}, \frac{\alpha+\nu+\tau+m}{2},\frac{\alpha+\nu+\tau+m+1}{2}}{1}\\
&
=\frac{\Gamma\left(m+\frac{d+1}{2}\right)}{\Gamma\left(\frac{m+\alpha+1}{2} \right)\Gamma\left(\frac{m+d-\alpha}{2} \right)}
\int_{0}^{1} t^{\frac{m+\alpha-1}{2}}(1-t)^{\frac{m-\alpha+d-2}{2}}\pFq{3}{2}{\frac{\alpha}{2}+\frac{m}{2},\frac{\tau}{2}+\frac{m}{2},\frac{\tau+1}{2}+\frac{m}{2}}{\frac{\alpha+\nu+\tau}{2}+\frac{m}{2},\frac{\alpha+\nu+\tau+1}{2}+\frac{m}{2}}{t}dt
\end{aligned}
\end{equation}

The following identity is taken from \cite{luke1969special} 7.3(3)

 \begin{equation}
\label{integrand-help}
\begin{aligned}
&\pFq{p+1}{p}{a_{p+1}+r}{b_{p}+r}{t}=(1-t)^{\xi}\Bigl[1+\frac{d_{1}t}{2r}+\sum_{k=2}^{n}\frac{d_{k}}{r^{k}}+O\left(\frac{1}{r^{n+1}}\right)\Bigr],\\
&{\rm{where}}\quad \xi =\sum_{j=1}^{p}b_{j}-\sum_{j=1}^{p+1}a_{j}-r, \quad \quad \quad d_{1} =(\xi+r)^{2}+\sum_{j=1}^{p}b_{j}^{2}- \sum_{j=1}^{p+1}a_{j}^2, \\
& d_{k} = \sum_{s=1}^{k}\beta_{k,s}t^{s},\,\,\,\,\,(2\le k\le m),  \quad {\rm{and}} \quad\quad |\arg(1-t)|<\pi.
\end{aligned}
\end{equation}
 The quantities $\beta_{k,s}$ above depend only on the parameters of $a_{p+1}$ and $b_{p}.$ For $p=2$ we can use (\ref{integrand-help}) to write the $ {}_{3}F_{2}$ hypergeometric function appearing in the integral (\ref{hyp_int_rep}) as follows
 \[
 \pFq{3}{2}{\frac{\alpha}{2}+\frac{m}{2},\frac{\tau}{2}+\frac{m}{2},\frac{\tau+1}{2}+\frac{m}{2}}{\frac{\alpha+\nu+\tau}{2}+\frac{m}{2},\frac{\alpha+\nu+\tau+1}{2}+\frac{m}{2}}{t}
 =(1-t)^{\frac{\alpha-m}{2}+\nu}\Bigl[1+\frac{d_{1}t}{m}+\frac{\beta_{2,1}t+\beta_{2,2}t^{2}}{\left(\frac{m}{2}\right)^2}+O\left(\frac{1}{m^3}\right)\Bigr].
 \]

 We can use the above to write
 \begin{equation}\label{ExpansionF}
 \begin{aligned}
 &\pFq{4}{3}{\frac{\alpha+m+1}{2},\frac{\alpha+m}{2},\frac{\tau+m}{2},\frac{\tau+m+1}{2}}{m+\frac{d+1}{2}, \frac{\alpha+\nu+\tau+m}{2},\frac{\alpha+\nu+\tau+m+1}{2}}{1}\\
 &=\frac{\Gamma\left(m+\frac{d+1}{2}\right)}{\Gamma\left(\frac{m+\alpha+1}{2} \right)\Gamma\left(\frac{m+d-\alpha}{2} \right)}\Bigl[
 I_{m,d,\nu,\alpha}(0)+\left(\frac{d_{1}}{m}+\frac{4\beta_{1,2}}{m^2}\right) I_{m,d,\nu,\alpha}(1)+\frac{4\beta_{2,2}}{m^{2}}I_{m,d,\nu,\alpha}(2)+ O\left(\frac{1}{m^3}\right)\Bigr],
\end{aligned}
\end{equation}
where
\[
\begin{aligned}
 I_{m,d,\nu,\alpha}(j)&=\int_{0}^{1} t^{\frac{m+\alpha-1}{2}+j}(1-t)^{\frac{d-2}{2}+\nu}dt\\
 &=B\left(\frac{m}{2}+\frac{\alpha+1}{2}+j,\frac{d}{2}+\nu\right)=\frac{\Gamma\left(\frac{m}{2}+\frac{\alpha+1}{2}+j\right)\Gamma\left(\frac{d}{2}+\nu\right)}{\Gamma\left( \frac{m}{2}+\frac{\alpha+1+d}{2}+j+\nu\right)}, \quad j=0,1,2.
\end{aligned}
\]
In the case where $m$ is large  we can apply Stirling's asymptotic formula (\ref{Stirling}) to deduce that
\[
   I_{m,d,\nu,\alpha}(j)\sim \frac{\Gamma\left(\frac{d}{2}+\nu\right)2^{\frac{d}{2}+\nu}}{m^{\frac{d}{2}+\nu}},\quad j=0,1,2,\quad {\rm{and}}\quad  \frac{\Gamma\left(m+\frac{d+1}{2}\right)}{\Gamma\left(\frac{m+\alpha+1}{2} \right)\Gamma\left(\frac{m+d-\alpha}{2} \right)}\sim \frac{2^{\frac{d-1}{2}}}{\sqrt{2\pi}}2^{m}\sqrt{m}.
   \]
These two  results allow us to deduce that, when $n$ is large, we have the following asymptotic formula
\begin{equation}\label{hypass}
\pFq{4}{3}{\frac{\alpha+m+1}{2},\frac{\alpha+m}{2},\frac{\tau+m}{2},\frac{\tau+m+1}{2}}{m+\frac{d+1}{2}, \frac{\alpha+\nu+\tau+m}{2},\frac{\alpha+\nu+\tau+m+1}{2}}{1}\sim
\frac{\Gamma\left(\frac{d}{2}+\nu\right)}{2\sqrt{\pi}}\frac{2^{m+d+\nu}}{m^{\frac{d-1}{2}+\nu}}\left(1+\frac{d_{1}}{m}+\frac{(\beta_{2,1}t+\beta_{2,2}t^2)}{m^{2}} \right).
\end{equation}

Bringing (\ref{hypass}) and (\ref{Const_decay}) together, we can conclude that
\[
b_{m,d}\sim \frac{\Gamma(\nu+\alpha)\Gamma(\nu+\tau)}{\Gamma(\alpha)\Gamma(\nu)\Gamma(\tau)}
\frac{2^{\nu+1}\Gamma\left(\frac{d}{2}+\nu\right)}{\Gamma\left(\frac{d}{2}\right)}\frac{1}{m^{1+2\nu}}.
\]

Lemma \ref{RKHSlem} shows that the  native space  $N_{\psi_{{\cal F}}}$ possesses the stated  reproducing kernel properties.  The norm equivalence of    $N_{\psi_{{\cal F}}}$   to the Sobolev space of order $ \nu+\frac{d}{2}$ follows from Remark \ref{onSob} and the established asymptotic decay rate of $b_{m,d,{\cal F}}(\blambda).$
 \end{proof}

\addtocontents{toc}{\vskip 4mm}
\addcontentsline{toc}{section}{\protect\numberline{}References} 
\bibliographystyle{chicago}

\



\bibliography{Bibliography_cor}

\end{document}